\newcommand{\errorset}{{\cal E}}
\begin{document}

%% MLJ Title information
\title{Unconfused Ultraconservative Multiclass Algorithms}
\subtitle{}

 \author{Ugo Louche \and Liva Ralaivola}
\institute{U. Louche\at
              Qarma, Lab. d'Informatique Fondamentale de Marseille,
              CNRS, Universit{\'e} d'Aix-Marseille\\
              \email{ugo.louche@lif.univ-mrs.fr}    
           \and
           L.  Ralaivola\at
              Qarma, Lab. d'Informatique Fondamentale de Marseille,
              CNRS, Universit{\'e} d'Aix-Marseille\\
              \email{liva.ralaivola@lif.univ-mrs.fr}           %  \\
}
%%%%

\maketitle

\begin{abstract}
We tackle the problem of learning linear
classifiers from noisy datasets in a multiclass setting. The two-class version
of this problem was
studied a few years ago where the proposed
approaches to combat the noise
revolve around a Perceptron learning scheme fed with peculiar
examples computed through a weighted average of points from the
noisy training set.
We propose to build upon these approaches and we introduce a new
algorithm called \uma (for Unconfused Multiclass additive Algorithm)
which may be seen as a generalization to the multiclass
setting of the previous approaches. 
In order to characterize the noise we use the
{\em confusion matrix} as a 
multiclass extension of the classification noise studied in the
aforementioned literature.
Theoretically well-founded, \uma furthermore displays very good
empirical noise robustness, as evidenced by numerical simulations
conducted on both synthetic  and real data.
\end{abstract}

%%% Local Variables: 
%%% mode: latex
%%% TeX-master: "unconfused"
%%% End: 

\keywords{Multiclass classification \and Perceptron \and Noisy labels
  \and Confusion Matrix \and Ultraconservative algorithms}

%!TEX root=unconfused.tex

\section{Introduction} 
\label{sec:intro}

\paragraph{Context.} 
This paper deals with linear multiclass classification problems defined on an
input space $\inputspace$ ({\em e.g.}, $\inputspace=\realset^d$) and a set
of classes $$\classes\doteq\{1,\ldots, Q\}.$$ In particular, we are
interested in  establishing the robustness of {\em
ultraconservative additive} algorithms \cite{crammer03ultraconservative} to label noise classification in
the multiclass setting---in order to lighten notation, we will now refer to these algorithms as {\em ultraconservative algorithms}.
We study whether it  is possible to learn a linear predictor from a training set
made of independent realizations of a pair $(X,Y)$ of random variables:
$$\trainingset\doteq\{(\bfx_i,y_i)\}_{i=1}^n$$ where $y_i\in\classes$ is a corrupted version of a {\em true label}, {\em i.e.}
deterministically computed class, $t(\bfx_i)\in\classes$ associated with $\bfx_i$,
according to some {\em concept} $t$. The random noise process $Y$ that corrupts the label to provide the $y_i$'s given the $\V{x}_i$'s is
supposed uniform within each pair of classes, thus it is fully described by a
{\em confusion matrix} $\confusion=(\confusion_{pq})_{p,q}\in\realset^{Q\times Q}$ so that
$$\forall \V{x},\confusion_{pt(\V{x})}=\proba_{Y}(Y=p|\V{x}).$$
 The goal that we would like to achieve is to
provide a learning procedure able to deal with the {\em confusion
  noise} present in the  training set
$\trainingset$ to give rise  to a classifier $h$ with small risk
$$\risk(h)\doteq\proba_{X\sim\UKDistri}(h(X)\neq t(X)),$$ 
$\UKDistri$ being the distribution according to which the
$\bfx_i$'s are obtained. As we want to recover
from the confusion noise, {\em i.e.}, we want to achieve low risk on uncorrupted/non-noisy data, 
we use the term {\em unconfused} to
characterize the procedures we propose.

Ultraconservative learning procedures are online learning algorithms that
output linear classifiers. They display nice
theoretical properties regarding their convergence in the case of
linearly separable datasets, provided a sufficient separation {\em
  margin} is guaranteed (as formalized in Assumption~\ref{ass:margin}
below). In turn, these convergence-related properties
yield generalization guarantees about the quality of the predictor learned. We build upon these nice convergence
properties to show that ultraconservative algorithms are robust to a confusion noise process,
provided that: i) $\confusion$ is invertible and can be accessed, ii) the original
dataset $\lbrace (\bfx_i, t(\bfx_i) ) \rbrace_{i=1}^n$ is linearly separable.
This paper is essentially devoted to proving how/why ultraconservative
multiclass algorithms are indeed robust to such situations.
To some extent, the results provided in the present contribution may be viewed as a generalization of the contributions on
learning binary perceptrons under misclassification noise \cite{blum96polynomialtime,bylander94learning}.

Beside the theoretical questions raised by the learning setting
considered, we may depict the following example of an actual learning
scenario where learning from noisy data is relevant. This learning
scenario will be further investigated from an empirical standpoint in
the section devoted to numerical simulations (Section~\ref{sec:Expe}).
\begin{example}
\label{ex:redwire}
One  situation where coping with mislabelled data is required
arises in (partially supervised) scenarios where labelling data is very expensive. Imagine a
task of text categorization from a training set
$\trainingset=\trainingset_{\ell}\union\trainingset_u$, where 
$\trainingset_{\ell}=\{(\bfx_i,y_i)\}_{i=1}^n$ is a set of $n$
labelled training examples and $\trainingset_u=\{\bfx_{n+i}\}_{i=1}^m$ is
a set of $m$ unlabelled vectors; in order to fall back to realistic
training scenarios where more labelled data cannot be acquired, we may assume
that $n\ll m$.
A possible three-stage strategy to learn a predictor is as follows: first learn a predictor $f_{\ell}$ on
$\trainingset_{\ell}$ and estimate its confusion  error $\confusion$ {\em via} a
cross-validation procedure---$f$ is assumed to make mistakes evenly over the class regions---, second, use the learned predictor to label all the data
in $\trainingset_u$ to produce
the labelled traning set $\widehat{\trainingset}=\{(\bfx_{n+i},t_{n+i}:=f(\bfx_{n+i}))\}_{i=1}^m$ and finally,
learn a classifier $f$ from $\widehat{\trainingset}$ {\em and} the
confusion information $\confusion$.
\end{example}
This introductory example pertains to semi-supervised learning and this is
only one possible learning scenario where the contribution we propose, \uma, might
be of some use. Still, it is essential to understand right away that 
one key feature of \uma, which sets it apart from many contributions encountered in 
the realm of semi-supervised learning, is that we do provide theoretical bounds on the 
sample complexity and running time required by our algorithm to output an effective predictor.

The present paper is an extended version of~\cite{LoucheR13ACML}. Compared with the original
paper, it provides a more detailed introduction of the tools used in
the paper, a more thorough discussion on related work as well as more
extensive numerical results (which confirm the relevance of our findings). A
strategy to make use of kernels for nonlinear classification has also been added.

\paragraph{Contributions.}
Our main contribution is to show that it is both 
practically and theoretically possible to learn a
multiclass classifier on noisy data if some 
information on the noise process is available.
We propose a way to generate new points for which the true class is known.
Hence we can iteratively populate a new \emph{unconfused} dataset to learn from.
This allows us to handle a massive amount of mislabelled data with only
a very slight loss of accuracy. 
We embed our method into ultraconservative algorithms and provide a thorough
analysis of it, in which we show that the strong
theoretical guarantees that characterize
the family of ultraconservative algorithms carry over to the noisy
scenario.

\paragraph{Related Work.} Learning from mislabelled data in an iterative
manner has a long-standing history in the machine learning community.
The first contributions on this topic, based on the Perceptron
algorithm \cite{minsky69perceptrons}, are those of
\cite{bylander94learning,blum96polynomialtime,cohen97learning}, which
promoted the idea utilized here that a sample average may be used to
construct update vectors relevant to a Perceptron learning
procedure. These first contributions were focused on the binary
classification case and, for
\cite{blum96polynomialtime,cohen97learning}, tackled the specific
problem of strong-polynomiality of the learning procedure in the {\em
  probably approximately correct} (PAC) framework
\cite{kearns94introduction}. Later, \cite{stempfel07learning} proposed
a binary learning procedure making it possible to learn a kernel
Perceptron in a noisy setting; an interesting feature of this work is
the recourse to random projections in order to lower the capacity of
the class of kernel-based classifiers. Meanwhile, many advances were
witnessed in the realm of online multiclass learning procedures. In
particular, \cite{crammer03ultraconservative} proposed families of
learning procedures subsuming the Perceptron algorithm, dedicated to
tackle multiclass prediction problems. A sibling family of algorithms,
the passive-aggressive online learning algorithms~\cite{crammer06online}, 
inspired both by the previous family and the idea of minimizing
instantaneous losses, were designed to tackle various problems, among
which multiclass  linear classification. Sometimes, learning with
partially labelled data might be viewed as a problem of learning with
corrupted data (if, for example, all the unlabelled data are randomly
or arbitrarily labelled) and it makes sense to mention the
works~\cite{Kakade2008Banditron} and~\cite{RalaivolaFGBD11} as 
distant relatives to the present work.

\paragraph{Organization of the paper.} 
Section \ref{sec:setting} formally states the setting
we consider throughout this paper. Section~\ref{sec:uma}
provides the details of our main contribution: the \uma
algorithm and its detailed theoretical analysis. Section~\ref{sec:Expe}
presents numerical simulations that support the soundness of our approach.

%%% Local Variables: 
%%% mode: latex
%%% TeX-master: "unconfused"
%%% End: 

%!TEX root=unconfused.tex

%Model notation and normalization goes here

\section{Setting and Problem} \label{sec:setting}
\subsection{Noisy Labels with Underlying Linear Concept}
The probabilistic setting we consider hinges on the existence of two
components. On the one hand, we assume an
unknown (but fixed) probability distribution $\UKDistri$ on
the {\em input space} $\inputspace\doteq\realset^d$. On the other hand, we
also assume the existence of a deterministic labelling function
$t:\inputspace\to\classes$, where $\classes\doteq\{1,\ldots Q\}$, which
associates a label $t(\bfx)$ to any input example $\bfx$; in the
{\em Probably Approximately Correct} (PAC) literature, $t$ is sometimes  referred
to as a {\em concept}
\cite{kearns94introduction,valiant84theory}. 

In the present paper, we focus on learning {\em linear classifiers}, 
defined as follows.
\begin{definition}[Linear classifiers]
\label{def:linear_classifiers}
The {\em linear classifier} $f_W:\inputspace\rightarrow\classes$ is a
classifier that is associated with a set of vectors
$W=[\V{w}_1\cdots\V{w}_Q]\in\realset^{d\times Q}$, which predicts the label
$f_W(\V{x})$ of any vector $\V{x}\in\inputspace$ as
\begin{equation}
\label{eq:linearprediction}
f_W(\V{x})=\argmax_{q\in\classes}\left\langle\V{w}_q,\V{x}\right\rangle.
\end{equation}
\end{definition}

Additionally,  without loss of generality, we suppose that
 $$\proba_{X\sim\UKDistri}\left(\left\|X\right\|=1\right)=1,$$
where $\|\cdot\|$ is the Euclidean norm. This allows us to introduce the notion of margin.
\begin{definition}[Margin of a linear classifier]
\label{def:margin}
Let $c:\inputset\rightarrow\classes$ be some fixed concept. Let
$W=[\V{w}_1\cdots\V{w}_Q]\in\realset^{d\times Q}$ be a set of $Q$
weight vectors. Linear classifier $f_W$ is said to have margin $\theta>0$ with respect
to $c$ (and distribution $\UKDistri$) if the following holds:
$$\proba_{X\sim\UKDistri}\left\{\exists p\neq
  c(X):\left\langle\V{w}_{c(X)}-\V{w}_{p},X\right\rangle\leq\theta\right\}=0.$$
Note that if $f_W$ has margin $\theta>0$ with respect
to $c$  then $$\proba_{X\sim\UKDistri}(f_W(X)\neq c(X))=0.$$
\end{definition}
Equipped with this definition, we shall consider that the following
assumption of linear separability with margin $\margin$ of concept $t$ holds throughout.
\begin{assumption}[Linear Separability of $t$ with Margin $\margin$.]
\label{ass:margin}
\label{ass:linear}
There exist $\margin\geq 0$ and  $W^*=[\bfw_1^*\cdots\bfw_Q^*]\in\realset^{d\times
  Q}$, with $\Vert W^* \Vert_F^2 = 1$ ($\Vert\cdot\Vert_F$ denotes the Frobenius 
  norm) such  that $f_{W^*}$ has margin $\theta$
  with respect to the concept $t$.
\end{assumption}

In a conventional setting, one would be asked to learn a
classifier $f$ from a training set
$$\inputset_{\text{true}}\doteq\{(\V{x}_i,t(\V{x}_i))\}_{i=1}^n$$ made of $n$
labelled pairs from $\inputspace\times\classes$ such that the
$\V{x}_i$'s are independent realizations of a random variable $X$
distributed according to $\UKDistri$, with the objective of minimizing
the \emph{true risk} or {\em misclassification
  error} $\risk_{\terror}(f)$ of $f$ given by
\begin{align} 
  \risk_{\terror}(f) \doteq \proba_{X\sim\UKDistri}(f(X) \neq t(X)).
\end{align}
In other words, the objective is for $f$ to have a prediction behavior
as close as possible to that of $t$.
As announced in the introduction, there is however a little twist in the problem
that we are going to tackle. Instead of having direct access to
$\inputset_{\text{true}}$, we assume that we only have access to a
corrupted version 
\begin{align}
  \inputset\doteq\lbrace (\V{x}_i, y_i) \rbrace_{i=1}^n
\end{align}
where each $y_i$ is the realization of a random variable $Y$ whose distribution 
agrees with the following assumption:
\begin{assumption} \label{ass:law}
The law $\UKDistri_{Y|X}$ of $Y$ is the same for all $x \in \inputspace$ and its
conditional distribution $$\proba_{Y\sim\UKDistri_{Y|X=\V{x}}}(Y|X=\bfx)$$
%Ugly
is fully summarized into a {\em known} confusion matrix $\confusion$ given by
\begin{align}
  \forall \V{x},\; \confusion_{pt(\V{x})} \doteq \proba_{Y\sim\UKDistri_{Y|X=\V{x}}}(Y = p |X =
  \bfx)=\proba_{Y\sim\UKDistri_{Y|X=\V{x}}}(Y = p |t(\V{x}) = q).
\label{eq:confusion}
\end{align}
\end{assumption}

Alternatively put, the noise process that corrupts the data is {\em uniform}
within each class and its level does not depend on the precise location of
$\bfx$ within the region that corresponds to class $t(\V{x})$. 
The noise process $Y$ is both a) aggressive, as it does not
only apply, as we may expect, to regions close to the class boundaries
between classes and b) regular,  as the 
mislabelling rate is piecewise constant. Nonetheless, this setting can account
for many real-world problems as numerous noisy phenomena
can be summarized by a simple confusion matrix. Moreover it has been
proved \cite{blum96polynomialtime} that robustness to classification noise
generalizes robustness to monotonic noise where, for each class, the noise
rate is a monotonically decreasing function of the distance to the class boundaries.

\begin{remark}
The confusion matrix $\confusion$ should not be mistaken with the matrix
$\tilde{\confusion}$ of general term: $\tilde{\confusion}_{ij} \doteq
\proba_{X\sim\UKDistri_{X|Y=j}}(t(X) = i | Y = j)$ which is the class-conditional distribution of $t(X)$ given $Y$. The problem of learning from
a noisy training set and $\tilde{\confusion}$ is a different problem than the
one we aim to solve. In particular, $\tilde{\confusion}$ can be used to define
cost-sensitive losses rather directly whereas doing so with $\confusion$ is far less obvious. Anyhow, this
second problem of learning from $\tilde{\confusion}$ is far from trivial and very interesting, and it falls way beyond the scope of the present work.
\end{remark}

Finally, we assume the following from here on:
\begin{assumption} \label{ass:1} 
$\confusion$ is invertible.
\end{assumption}
Note that this assumption is not as restrictive as it may appear. For
instance, if we consider the learning setting depicted in
Example~\ref{ex:redwire} and implemented in the numerical simulations, 
then the confusion matrix obtained from the
first predictor $f_{\ell}$ is often diagonally dominant, {\em i.e.} the
magnitudes of the diagonal entries are larger than the sum of the
magnitudes of the entries in their corresponding rows, and $\confusion$ is therefore
invertible. Generally speaking, the problems that we are interested in ({\em i.e.}
problems where the true classes seems to be recoverable) tend to have
invertible confusion matrix. It is most likely that invertibility is merely a sufficient
condition on $\confusion$ that allows us to establish learnability in the
sequel. Identifying less stringent conditions on $\confusion$, or conditions termed in a different way---which would for instance be based on the condition number of $\confusion$---for learnability to remain,  is a research issue of its own that
we leave for future investigations.

The setting we have just presented allows us to view $\inputset=\{(\bfx_i,y_i)\}_{i=1}^n$ as the
realization of a random sample $\{(X_i,Y_i)\}_{i=1}^n$, where each
pair $(X_i,Y_i)$ is an independent copy of the random pair $(X,Y)$ of
law $\UKDistri_{XY}\doteq\UKDistri_X\UKDistri_{X|Y}.$

\subsection{Problem: Learning a Linear Classifier from Noisy Data}
The problem we address is the learning of a classifier $f$ from
$\inputset$ {\em and} $\confusion$ so that the error
rate $$\risk_{\terror}(f)=\proba_{X\sim\UKDistri}(f(X)\neq t(X))$$ of
$f$ is as small as possible: the usual goal of learning a
classifier $f$ with small risk is preserved, while now the training data
is only made of corrupted labelled pairs.

Building on Assumption~\ref{ass:linear}, we may refine our learning
objective by restricting ourselves to linear classifiers $f_W$, for
$W=[\V{w}_1\cdots\V{w}_Q]\in\realset^{d\times Q}$ (see
Definition~\ref{def:linear_classifiers}). %with
%\begin{equation}
%\label{eq:linearprediction}
%\forall\bfx\in\inputspace,\;
%f_W(\bfx)=\argmax\nolimits_{q\in\classes}\langle\bfw_q,\bfx\rangle.
%\end{equation}
Our goal is thus to learn a relevant matrix $W$ from
$\inputset$ {\em and} the confusion matrix $\confusion$. More precisely, we
achieve risk minimization through classic additive methods and the core of
this work is focused on computing noise-free update points such that the
properties of said methods are unchanged.

%%% Local Variables: 
%%% mode: latex
%%% TeX-master: "unconfused"
%%% End: 
 %aka definition
%!TEX root=unconfused.tex

\section{{\sc Uma}: Unconfused Ultraconservative Multiclass Algorithm}
\label{sec:uma}
This section presents the main result of the paper, that is, the \uma
procedure, which is a response to the problem posed above: \uma makes
it possible to learn a multiclass linear predictor
from $\inputset$ and the confusion information $\confusion$. In
addition to the algorithm itself, this section provides theoretical
results regarding the convergence and sample complexity of \uma.

As \uma is a generalization of the ultraconservative additive online
algorithms proposed in \cite{crammer03ultraconservative} to the case
of noisy labels, we first and foremost recall the essential features of this
family of algorithms. The rest of the section is then devoted to the
presentation and analysis of \uma.

\subsection{A Brief Reminder on Ultraconservative Additive Algorithms}
Ultraconservative additive online algorithms were introduced by Crammer et
al. in \cite{crammer03ultraconservative}. As already stated, these
algorithms output multiclass linear predictors $f_W$ as in Definition~\ref{def:linear_classifiers}
and their purpose is therefore to compute a set
$W=[\V{w}_1\cdots\V{w}_Q]\in\realset^{d\times Q}$ of $Q$ weight vectors
from some training sample $\trainingset_{\text{true}}=\{(\V{x}_i,t(
\V{x}_i ) )\}_{i=1}^n$.
To do so, they implement the procedure depicted in Algorithm~\ref{alg:ua}, which centrally revolves
around the identification of an {\em error set} and its simple update: when processing a training
pair $(\V{x},y)$, they perform updates of the
form
$$\V{w}_q\leftarrow\V{w}_q+\tau_q\V{x}, \; q=1,\ldots
Q,$$
whenever the {\em error set}
 $\errorset(\V{x},y)$ defined as
\begin{equation}
\label{eq:errorset}
\errorset(\V{x},y)\doteq\left\{r\in\classes\backslash\{y\}:
\langle\bfw_r,\V{x}\rangle-\langle\bfw_{y},\V{x}\rangle\geq
0\right\}
\end{equation}
is not empty, with the constraint for the family $\{\tau_q\}_{q\in \classes}$ of {\em
  step sizes} to fulfill
\begin{equation}
\left\{\begin{array}{l}\tau_{y}=1\\ \tau_r\leq 0, \text{ if }
r\in\errorset(\V{x},y)\\ \tau_r=0,\text{ otherwise}\end{array}\right.\quad
\text{ and }\quad\sum_{r=1}^Q\tau_r=0.
\label{eq:tau}
\end{equation}
The term {\em ultraconservative} refers to the fact that only those
prototype vectors $\V{w}_r$ which achieve a larger inner product $\langle\V{w}_r,\V{x}\rangle$ than $\langle\V{w}_y,\V{x}\rangle$, that is, the 
vectors that can entail a prediction mistake when decision rule~\eqref{eq:linearprediction} is applied, 
 may be affected by the update procedure. The term {\em
  additive} conveys the fact that the updates consist in modifying the
weight vectors $\V{w}_r$'s by adding a portion of $\V{x}$ to
them (which is to be opposed to multiplicative update schemes). Again,
as we only consider these additive types of updates in what follows,
it will have to be implicitly understood even when not explicitly mentioned.

\begin{algorithm}[tb]
\caption{Ultraconservative Additive algorithms
\cite{crammer03ultraconservative}.\label{alg:ua}}
\begin{algorithmic}
\REQUIRE $\trainingset_{\text{true}}$
\ENSURE $W=\left[\V{w}_1, \ldots, \V{w}_Q \right]$ and associated classifier $f_W(\cdot)=\argmax_{q}\langle\bfw_q,\cdot\rangle$\vspace{3mm}
\STATE Initialization: $\V{w}_q\leftarrow 0,\;\forall q\in\classes$
\REPEAT % {some stopping criterion is not met}
\STATE access training pair $(\V{x}_t,y_t)$
\STATE compute the error set $\errorset(\V{x}_t,y_t)$ according to~\eqref{eq:errorset}
\IF{$\errorset(\V{x}_t,y_t)\neq\emptyset$}
\STATE compute a set $\{\tau_q\}_{q\in\classes}$ of update steps
that comply with~\eqref{eq:tau}
\STATE perform the updates
$$\V{w}_q \leftarrow \V{w}_q +\tau_q \V{x}_q,\;\forall q\in\classes$$
\ENDIF
\UNTIL{some stopping criterion is met}
\end{algorithmic}
\end{algorithm}

One of the main results regarding ultraconservative algorithms, which
we extend in our learning scenario is the following.
\begin{theorem}[Mistake bound for ultraconservative algorithms \cite{crammer03ultraconservative}.]
\label{th:ua}
Suppose that concept $t$ is in accordance with
Assumption~\ref{ass:margin}. The number of mistakes/updates made by
one pass over $\trainingset$ by any ultraconservative procedure
is upper-bounded by $2/\theta^2$.
\end{theorem}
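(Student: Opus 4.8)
The plan is to run the classical Perceptron-style potential argument, now carried out at the level of the whole weight matrix $W=[\V{w}_1\cdots\V{w}_Q]$ rather than a single weight vector. Throughout, I would track two quantities across the sequence of updates: the Frobenius inner product $\langle W, W^*\rangle_F \doteq \sum_{q}\dotProd{\V{w}_q}{\V{w}_q^*}$ with the separating matrix $W^*$ supplied by Assumption~\ref{ass:margin}, and the squared Frobenius norm $\fro{W}^2$. Since $W$ is initialized at $0$, bounding how much each of these changes at a single update, and then combining the two bounds via Cauchy--Schwarz, will pin down the number $M$ of updates performed.

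First I would establish a linear lower bound on the alignment with $W^*$. When an update is triggered on a pair $(\V{x},y)$, the quantity $\langle W, W^*\rangle_F$ grows by $\sum_q \tau_q \dotProd{\V{w}_q^*}{\V{x}}$. Using $\tau_y=1$, $\tau_r\le 0$ on $\errorset(\V{x},y)$, $\tau_r=0$ elsewhere, and $\sum_r\tau_r=0$ (hence $\sum_{r\in\errorset}\tau_r=-1$), and feeding in the margin inequality $\dotProd{\V{w}_y^*}{\V{x}}-\dotProd{\V{w}_r^*}{\V{x}}\ge\theta$ from Assumption~\ref{ass:margin}, a short manipulation shows this increment is at least $\theta$. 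After $M$ updates this yields $\langle W,W^*\rangle_F\ge M\theta$.

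Next I would establish a linear upper bound on $\fro{W}^2$. Expanding the norm after an update produces a cross term $2\sum_q\tau_q\dotProd{\V{w}_q}{\V{x}}$ and a quadratic term $\big(\sum_q\tau_q^2\big)\norm{\V{x}}^2$. The cross term is $\le 0$, and this is exactly where the ultraconservative definition of $\errorset$ does its job: for $r\in\errorset$ we have $\dotProd{\V{w}_r}{\V{x}}\ge\dotProd{\V{w}_y}{\V{x}}$, which combined with $\tau_r\le 0$ and $\sum_r\tau_r=0$ forces the whole cross term to collapse to zero or below. For the quadratic term, $\norm{\V{x}}=1$ and the step-size constraints give $\sum_q\tau_q^2=1+\sum_{r\in\errorset}\tau_r^2\le 2$, since the nonnegative numbers $|\tau_r|$ sum to $1$ and therefore have sum of squares at most $1$. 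Hence each update increases $\fro{W}^2$ by at most $2$, so $\fro{W}^2\le 2M$.

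Finally I would combine the two bounds: Cauchy--Schwarz together with $\fro{W^*}=1$ gives $M\theta\le\langle W,W^*\rangle_F\le\fro{W}\fro{W^*}\le\sqrt{2M}$, and squaring and simplifying delivers $M\le 2/\theta^2$. I expect the only delicate part to be the bookkeeping with the step-size constraints~\eqref{eq:tau}: one must invoke $\sum_r\tau_r=0$ twice—once to sharpen the margin bound to exactly $\theta$, once to annihilate the cross term—and use the normalization $\sum_{r\in\errorset}|\tau_r|=1$ to control $\sum_q\tau_q^2$. None of this is technically hard, but it is precisely the place where the specific algebra of ultraconservative updates, rather than a generic Perceptron argument, carries the proof.
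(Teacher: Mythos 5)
Your proof is correct, and it is essentially the canonical argument: the paper itself states Theorem~\ref{th:ua} without proof, deferring to \cite{crammer03ultraconservative}, and the proof given there is exactly this matrix-level Perceptron potential argument---lower-bounding the growth of $\langle W, W^*\rangle_F$ by $\theta$ per update via the constraint $\sum_{r\in\errorset}\tau_r=-1$, upper-bounding the growth of $\fro{W}^2$ by $2$ per update (nonpositive cross term by definition of $\errorset$, plus $\sum_q\tau_q^2\leq 2$), and combining via Cauchy--Schwarz with $\fro{W^*}=1$ to get $M\theta\leq\sqrt{2M}$, i.e., $M\leq 2/\theta^2$. All the delicate bookkeeping steps you flag (using $\sum_r\tau_r=0$ twice and $\sum_{r\in\errorset}|\tau_r|=1$ to control the sum of squares) are handled correctly, so there is no gap.
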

This result is essentially a generalization of the well-known
Block-Novikoff theorem~\cite{block62perceptron,novikoff63convrgence}, which establishes a mistake bound for
the Perceptron algorithm (an ultraconservative algorithm itself).
\subsection{Main Result and High Level Justification}
This section presents our main contribution, \uma, a theoretically grounded noise-tolerant
multiclass algorithm depicted in Algorithm~\ref{alg:uma}. \uma learns
and outputs a matrix $W=[\V{w}_1\cdots\V{w}_Q]\in\realset^{d\times Q}$
from a noisy training set $\trainingset$ to produce the associated linear
classifier
\begin{equation} \label{eq:additive}
f_W(\cdot)=\argmax_q\langle\bfw_q,\cdot\rangle
\end{equation}
by iteratively
updating the $\V{w}_q$'s, whilst maintaining $\sum_q\V{w}_q=0$
throughout the learning process. As a new member of multiclass additive algorithms, we
may readily recognize in step~\ref{step:uc1} through
step~\ref{step:uc2} of Algorithm~\ref{alg:uma} the 
generic step sizes $\{\tau_q\}_{q\in\classes}$ promoted by ultraconservative algorithms
(see Algorithm~\ref{alg:ua}).
 An important feature of \uma
is that it only uses information provided by $\trainingset$ and does
not make assumption on the accessibility to the noise-free dataset $\trainingset_{\text{true}}$: the incurred pivotal difference with regular ultraconservative algorithms is that the update points used are now the computed (line~\ref{step:mistakestart} through line~\ref{step:mistakeend}) $\bfz_{pq}$ vectors
instead of the $\bfx_i$'s. Establishing that under some conditions \uma stops and provides a
classifier with small risk when those update points are used is the purpose of the following
subsections; we will also discuss the unspecified
step~\ref{step:select}, dealing with the selection step.

\begin{algorithm}[tb]
\caption{\uma:  Unconfused Ultraconservative Multiclass
Algorithm.\label{alg:uma}} \begin{algorithmic}[1]
\REQUIRE $\inputset= \{(\V{x}_i, y_i)\}_{i=1}^{n}$,
confusion matrix $\confusion\in\realset^{Q\times Q}$, and $\alpha>0$
\ENSURE $W=\left[\V{w}_1, \ldots, \V{w}_K \right]$ and classifier $f_W(\cdot)=\argmax_{q}\langle\bfw_q,\cdot\rangle$\vspace{3mm}
\STATE $\V{w}_k\leftarrow 0$, $\forall k\in\classes$
\REPEAT % {some stopping criterion is not met}
\STATE select $p$ and $q$ \label{step:select}
\STATE compute set $\setA{p}^{\alpha}$ as  $$\setA{p}^{\alpha} \leftarrow 
\left\{ \V{x} |\V{x}\in\inputset ,
        \dotProd{\V{w}_p}{\V{x}} 
        - \dotProd{\V{w}_k}{\V{x}} 
        \geq \alpha,\;\forall k\neq p\right\}$$\label{step:mistakestart}
\STATE for $k=1,\ldots,Q$, compute $\gamma_{k}^p$ as
$$\gamma_k^p\leftarrow\frac{1}{n}\sum_{i=1}^{n}\indicator{y_i=k}\indicator{\V{x}_i\in
\setA{p}^{\alpha}}\V{x}_i^{\top},\;\forall k\in\classes$$\label{step:gamma_kp} 
\STATE form $\Gamma^p\in\realset^{Q\times d}$ as
$$\Gamma^p\leftarrow\left[\gamma_1^p\cdots \gamma_Q^p\right]^{\top},$$
\STATE  compute the update vector $\xup{pq}$ according to 
($[M]_q$ refers to the $q$th row of matrix $M$) 
$${\V{z}}_{pq}\leftarrow ([\confusion^{-1}\Gamma^p]_q)^{\top},$$
\label{step:mistakeend}
\STATE compute the error set $\errorset^{\alpha}(\V{z}_{pq},q)$ as\\
$$\errorset^{\alpha}(\V{z}_{pq},q)\leftarrow\{r\in\classes\backslash\{q\}:
\langle\bfw_r,\V{z}_{pq}\rangle-\langle\bfw_q,\V{z}_{pq}\rangle\geq\alpha\}$$\label{step:uc1}
\IF{$\errorset^{\alpha}(\V{z}_{pq},q)\neq \emptyset$}
\STATE compute some ultraconservative update steps
$\tau_1,\ldots,\tau_Q$ such that:
 $$\left\{\begin{array}{l}\tau_q=1\\ \tau_r\leq 0,\forall
r\in\errorset^{\alpha}(\V{z}_{pq},q)\\ \tau_r=0,\text{
  otherwise}\end{array}\right.\quad\text{and}\quad \sum_{r=1}^Q\tau_r=0$$\label{step:uc2}
\STATE perform the updates for $r=1,\ldots,Q$:\\
 $$\V{w}_r \leftarrow \V{w}_r +\tau_r \xup{pq}$$
\ENDIF
%\hspace{2mm} $\V{w}_q \leftarrow \V{w}_q + \xup{pq}$
\UNTIL{$\|{\bf z}_{pq}\|$ is too small}\label{step:criterion}
\end{algorithmic}
\end{algorithm}

For the impatient reader, we may already leak some
of the ingredients we use to prove the relevance of our procedure.
Theorem~\ref{th:ua}, which shows the convergence of ultraconservative
algorithms, rests on the analysis of the updates made when
training examples are misclassified by the current classifier. The conveyed
message is therefore that examples that are erred upon are central to
the convergence analysis. It turns out that
steps~\ref{step:mistakestart} through~\ref{step:mistakeend} of \uma
(cf. Algorithm~\ref{alg:uma}) construct a point $\xup{pq}$ that is,
with high probabilty,  mistaken on. More
precisely, the true class $t(\xup{pq})$ of $\xup{pq}$ is $q$ and it is predicted to be of
class $p$ by the current classifier; at the same time, these
update vectors are guaranteed to realize a positive margin
condition with respect to $W^*$:
$\langle\V{w}_q^*,\xup{pq}\rangle>\langle\V{w}_k^*,\xup{pq}\rangle$
for all $k\neq q$. The ultraconservative feature of the algorithm is
carried by step~\ref{step:uc1} and step~\ref{step:uc2}, which make it
possible to update any prototype vector $\bfw_r$ with $r\neq q$ having
an inner product $\langle\bfw_r,{\bf
  z}_{pq}\rangle$ with ${\bf z}_{pq}$ larger than $\langle\bfw_q,{\bf
  z}_{pq}\rangle$ (which should be the largest if a correct prediction
were made). The reason why we have results `with high
probability' is because the $z_{pq}$'s are sample-based estimates
of update vectors known to be of class $q$ but predicted as being of
class $p$, with $p\neq q$;
computing the accuracy of the sample estimates is one of the important
exercises of what follows. A control on the accuracy makes it possible
for us to then establish the convergence of the proposed algorithm.

%We now turn to the formal analysis of $\uma$.

\subsection{With High Probability, ${\bf z}_{pq}$ is a Mistake with Positive Margin} \label{sec:UpnAl}
Here, we prove that the update vector $\xup{pq}$ given in
step~\ref{step:mistakeend} is, with high  probability, a point on
which the current classifier errs.

\begin{proposition}
\label{prop:tilde_update}
Let $W=[\V{w}_1\cdots \V{w}_Q]\in\realset^{d\times Q}$ and $\alpha>
0$ be fixed.
Let $\setA{p}^{\alpha}$ be defined as in step~\ref{step:mistakestart} of Algorithm~\ref{alg:uma}, i.e:
\begin{equation}
\setA{p}^{\alpha}\doteq \left\{ \V{x} |\V{x}\in\inputset , 
        \dotProd{\V{w}_p}{\V{x}} 
        - \dotProd{\V{w}_k}{\V{x}} 
        \geq \alpha,\;\forall k\neq p\right\}.
\label{eq:Ap}
\end{equation}
For $k\in\classes$, $p\neq k$, consider the random
variable $\gamma_{k}^p$ ($\gamma_{k}^p$ in step~\ref{step:gamma_kp} of Algorithm~\ref{alg:uma} is a realization
of this variable, hence the overloading of notation $\gamma_{k}^p$):
$$\gamma_{k}^p\doteq\frac{1}{n}\sum_{i}\indicator{Y_i=k}\indicator{X_i\in\setA{p}^{\alpha}}X_i^{\top}.$$
The following holds, for all $k\in\classes$:
\begin{equation}
\expectation_{\trainingset}\left\{\gamma_{k}^p\right\}=\expectation_{\{(X_i,Y_i)\}_{i=1}^n}\left\{\gamma_{k}^p\right\}=\sum_{q=1}^Q\confusion_{kq}\mu_q^p,
\end{equation}
where
\begin{equation}
	\label{eq:muqp}\mu_q^p\doteq\expectation_{\UKDistri_X}\left\{\indicator{t(X)=q}\indicator{X\in\setA{p}^{\alpha}}X^{\top}\right\}.
\end{equation}
\end{proposition}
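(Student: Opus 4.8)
The plan is to prove the identity by a direct computation of the expectation: reduce the normalized $n$-sample average to a single generic pair $(X,Y)$, and then peel off the label noise by conditioning on $X$. The one structural fact I would lean on throughout is that, with $W$ and $\alpha$ fixed as in the statement, membership in $\setA{p}^{\alpha}$ is decided by the \emph{pointwise} condition $\dotProd{\V{w}_p - \V{w}_k}{\V{x}} \geq \alpha$ for all $k \neq p$. Hence $\indicator{X_i \in \setA{p}^{\alpha}}$ is a measurable function of $X_i$ alone: it is independent of the noisy label $Y_i$ once $X_i$ is known, and also independent of the remaining sample points.

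First I would invoke linearity of expectation together with the assumption that the pairs $(X_i,Y_i)$ are i.i.d.\ copies of $(X,Y)$, which collapses the sum into a single representative term:
\[
\expectation_{\trainingset}\{\gamma_k^p\} = \frac{1}{n}\sum_{i=1}^n \expectation\{\indicator{Y_i = k}\indicator{X_i \in \setA{p}^{\alpha}} X_i^\top\} = \expectation\{\indicator{Y = k}\indicator{X \in \setA{p}^{\alpha}} X^\top\}.
\]
Next I would apply the tower rule, conditioning on $X$. Since both $\indicator{X \in \setA{p}^{\alpha}}$ and $X^\top$ are functions of $X$, they factor out of the inner conditional expectation over $Y\mid X$, leaving only $\expectation_{Y|X}\{\indicator{Y=k}\} = \proba(Y = k \mid X)$. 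This is exactly where Assumption~\ref{ass:law} enters: the confusion model gives $\proba(Y = k \mid X = \V{x}) = \confusion_{k,t(\V{x})}$, depending on $\V{x}$ only through its true label $t(\V{x})$. Thus
\[
\expectation\{\indicator{Y=k}\indicator{X \in \setA{p}^{\alpha}} X^\top\} = \expectation_{\UKDistri_X}\{\indicator{X \in \setA{p}^{\alpha}}\, \confusion_{k,t(X)}\, X^\top\}.
\]

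Finally I would resolve the dependence on $t(X)$ by summing over the true classes. Because $\{t(X) = q\}_{q\in\classes}$ is a partition, we have $\confusion_{k,t(X)} = \sum_{q=1}^Q \indicator{t(X) = q}\,\confusion_{kq}$; substituting this, pulling the scalar constants $\confusion_{kq}$ out of the expectation, and recognizing each surviving term as $\mu_q^p$ from~\eqref{eq:muqp} yields $\sum_{q=1}^Q \confusion_{kq}\,\mu_q^p$, as claimed. I do not expect a genuine obstacle in this argument, since it is purely an unconditioning calculation; the only point that demands real care is the opening remark, namely justifying that the event $\{X_i \in \setA{p}^{\alpha}\}$ is noise-free (a function of $X_i$ alone). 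It is precisely this independence from $Y_i$ that legitimizes the clean factorization in the conditioning step, and hence the emergence of the confusion matrix $\confusion$ as the only carrier of the label noise in the final expression.
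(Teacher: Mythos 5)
Your proof is correct and follows essentially the same route as the paper's: both reduce to a single pair $(X,Y)$ by the i.i.d.\ assumption, marginalize over the label noise given $X$ (your tower-rule step is the paper's explicit integral against $\proba_Y(Y=k|X=\V{x})\,d\UKDistri_X$), substitute $\proba(Y=k|X=\V{x})=\confusion_{k\,t(\V{x})}$ from Assumption~\ref{ass:law}, partition by the true class via $\confusion_{k\,t(\V{x})}=\sum_{q}\indicator{t(\V{x})=q}\confusion_{kq}$, and pull out the constants to recognize $\mu_q^p$. Your opening observation that $\indicator{X_i\in\setA{p}^{\alpha}}$ is a noise-free function of $X_i$ alone is implicit in the paper's integral formulation, so there is no substantive difference.
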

\begin{proof}
Let us compute
$\expectation_{\UKDistri_{XY}}\{\indicator{Y=k}\indicator{X\in\setA{p}^{\alpha}}X^{\top}\}$:
\begin{align*}
\expectation_{\UKDistri_{XY}}&\{\indicator{Y=k}\indicator{X\in\setA{p}^{\alpha}}X^{\top}\}\\
&=\int_{\inputspace}\sum_{q=1}^Q\indicator{q=k}\indicator{\V{x}\in\setA{p}^{\alpha}}\V{x}^{\top}\proba_{Y}(Y=q|X=\V{x})d\UKDistri_{X}(\V{x})\\
&=\int_{\inputspace}\indicator{\V{x}\in\setA{p}^{\alpha}}\V{x}^{\top}\proba_{Y}(Y=k|X=\V{x})d\UKDistri_{X}(\V{x})\\
&=\int_{\inputspace}\indicator{\V{x}\in\setA{p}^{\alpha}}\V{x}^{\top}\confusion_{kt(\bfx)}d\UKDistri_{X}(\V{x})\tag{cf.~\eqref{eq:confusion}}\\
&=\int_{\inputspace}\sum_{q=1}^Q\indicator{t(\V{x})=q}\indicator{\V{x}\in\setA{p}^{\alpha}}\V{x}^{\top}\confusion_{kq}d\UKDistri_{X}(\V{x})\\
&=\sum_{q=1}^Q
\confusion_{kq}\int_{\inputspace}\indicator{t(\V{x})=q}\indicator{\V{x}\in\setA{p}^{\alpha}}\V{x}^{\top}d\UKDistri_{X}(\V{x})
=\sum_{q=1}^Q \confusion_{kq}\mu_q^p,
\end{align*}
where the last line comes from the fact that the classes are
non-overlapping. The $n$ pairs $(X_i,Y_i)$ being identically
and independently distributed gives the result.\qed
\end{proof}

Intuitively, $\mu_q^p$  must be seen as an example of class $p$ which is erroneously predicted
as being of class $q$. Such an example 
is precisely what we are looking for to update the current classifier;
as expecations cannot be computed, the estimate $\xup{pq}$  of
$\mu_q^p$ is used instead of $\mu_q^p$. 

\begin{proposition} \label{prop:tilde_update2}
Let $W=[\V{w}_1\cdots \V{w}_Q]\in\realset^{d\times Q}$ and $\alpha\geq
0$ be fixed.
For $p,q\in\classes$, $p\neq q$, ${\V{z}}_{pq}\in\realset^d$ is
such that
\begin{align}
&\expectation_{\UKDistri_{XY}}{{\V{z}}_{pq}}=\mu_{q}^p\\
&\langle\V{w}_q^*,\mu_{q}^p\rangle -
\langle\V{w}_k^*,\mu_{q}^p\rangle\geq\theta,\;\forall
k\neq q,\label{eq:mumargin}\\
&\langle\V{w}_p,\mu_{q}^p\rangle -
\langle\V{w}_k,\mu_{q}^p\rangle\geq \alpha,\;\forall k\neq p.\label{eq:muerror}
\end{align}
(Normally, we should consider the transpose of $\mu_{q}^p$, but since
we deal with vectors of $\realset^d$---and not matrices---we abuse
the notation and omit
the transpose.)

This means that 
\begin{enumerate}[i)]
\item $t(\mu_{q}^p)=q$, {\em i.e.} the `true' class of $\mu_{q}^p$ is $q$;
\item  and $f_W(\mu_{q}^p)=p$; $\mu_{q}^p$ 
is therefore misclassified by the current classifier $f_W$.
\end{enumerate}
\end{proposition}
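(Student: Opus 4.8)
The three displayed relations split naturally: the first is a plain expectation identity, while the last two are geometric statements about the single vector $\mu_q^p$. Once \eqref{eq:mumargin} and \eqref{eq:muerror} are established, conclusions i) and ii) are read off directly from the $\argmax$ rule~\eqref{eq:linearprediction}: inequality \eqref{eq:mumargin} says that index $q$ strictly dominates every other index under $W^*$, so $t(\mu_q^p)=f_{W^*}(\mu_q^p)=q$, and \eqref{eq:muerror} says that $p$ dominates under the current $W$, so $f_W(\mu_q^p)=p$. Thus the whole proposition reduces to the two inequalities plus the mean identity.

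For the identity $\expectation_{\UKDistri_{XY}}\{\V{z}_{pq}\}=\mu_q^p$ I would pass to matrix form. Writing $M\doteq[\mu_1^p\cdots\mu_Q^p]^\top$, Proposition~\ref{prop:tilde_update} states exactly that the $k$th row of $\expectation_{\trainingset}\{\Gamma^p\}$ is $\sum_{q'}\confusion_{kq'}\mu_{q'}^p$, that is, $\expectation\{\Gamma^p\}=\confusion M$. Since $\V{z}_{pq}=([\confusion^{-1}\Gamma^p]_q)^\top$ depends linearly on $\Gamma^p$, linearity of expectation together with Assumption~\ref{ass:1} (invertibility of $\confusion$) gives $\expectation\{\V{z}_{pq}\}=([\confusion^{-1}\confusion M]_q)^\top=([M]_q)^\top=\mu_q^p$, under the transpose abuse noted in the statement. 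This is the only place the confusion matrix and its invertibility intervene.

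For the two inequalities the key move is to push the fixed linear form through the expectation defining $\mu_q^p$ in~\eqref{eq:muqp}. For any index $k$,
\begin{align*}
\dotProd{\V{w}_q^*-\V{w}_k^*}{\mu_q^p}
=\expectation_{\UKDistri_X}\left\{\indicator{t(X)=q}\indicator{X\in\setA{p}^{\alpha}}\dotProd{\V{w}_q^*-\V{w}_k^*}{X}\right\}.
\end{align*}
On the event $\{t(X)=q\}$ the margin Assumption~\ref{ass:margin} guarantees $\dotProd{\V{w}_q^*-\V{w}_k^*}{X}\geq\theta$ almost surely for every $k\neq q$, so the integrand dominates $\theta\,\indicator{t(X)=q}\indicator{X\in\setA{p}^{\alpha}}$ pointwise, which yields \eqref{eq:mumargin}. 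Identically, the very definition of $\setA{p}^{\alpha}$ forces $\dotProd{\V{w}_p-\V{w}_k}{X}\geq\alpha$ whenever $X\in\setA{p}^{\alpha}$ and $k\neq p$, and the same pushing-through argument gives \eqref{eq:muerror}.

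The one point deserving care — and the only real subtlety I anticipate — is the normalisation. The pointwise bounds above actually carry the mass $\pi\doteq\proba_{\UKDistri_X}(t(X)=q,\,X\in\setA{p}^{\alpha})\in[0,1]$, so that one literally gets $\dotProd{\V{w}_q^*-\V{w}_k^*}{\mu_q^p}\geq\theta\,\pi$ and $\dotProd{\V{w}_p-\V{w}_k}{\mu_q^p}\geq\alpha\,\pi$. Because $\pi>0$ as soon as the selected set is non-negligible, the strict domination required for conclusions i) and ii) is preserved; the clean constants $\theta$ and $\alpha$ are recovered exactly if one reasons with the normalised conditional mean $\mu_q^p/\pi$ rather than the unnormalised $\mu_q^p$. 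Everything else is linearity of expectation and a pointwise application of the margin assumption and of the defining inequality of $\setA{p}^{\alpha}$.
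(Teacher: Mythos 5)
Your proof is correct and follows essentially the same route as the paper's: the mean identity is obtained via the matrix equality $\expectation\{\Gamma^p\}=\confusion\,[\mu_1^p\cdots\mu_Q^p]^{\top}$ and Assumption~\ref{ass:1}, and the two inequalities follow from linearity of expectation combined with the pointwise margin bound of Assumption~\ref{ass:margin} (for \eqref{eq:mumargin}) and the defining inequality of $\setA{p}^{\alpha}$ (for \eqref{eq:muerror}). Your normalisation caveat is a genuine refinement that the paper silently glosses over: since $\mu_q^p$ is an \emph{unnormalised} partial mean, the pointwise bounds literally yield $\theta\,\pi$ and $\alpha\,\pi$ with $\pi=\proba_{X\sim\UKDistri}\left(t(X)=q,\;X\in\setA{p}^{\alpha}\right)\leq 1$, so the stated constants $\theta$ and $\alpha$ are exact only for the conditional mean $\mu_q^p/\pi$; what survives for the unnormalised vector is the positivity of the margins whenever $\pi>0$, which is precisely what conclusions i) and ii) and the downstream convergence analysis require.
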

\begin{proof}
According to Proposition~\ref{prop:tilde_update},
$$\expectation_{\UKDistri_{XY}}\left\{\Gamma^{p}\right\}=\expectation_{\UKDistri_{XY}}\left\{\left[\begin{array}{c}\gamma_1^p\\
    \vdots\\ \gamma_Q^p\end{array}\right]\right\}=\left[\begin{array}{c}\expectation_{\UKDistri_{XY}}\left\{\gamma_1^p\right\}\\
    \vdots\\
    \expectation_{\UKDistri_{XY}}\left\{\gamma_Q^p\right\}\end{array}\right]=\left[\begin{array}{c}\sum_{q=1}^Q\confusion_{1q}\mu_1^p\\\vdots\\\sum_{q=1}^Q\confusion_{Qq}\mu_Q^p\end{array}\right]=\confusion
    \left[\begin{array}{c}\mu_1^p\\\vdots\\\mu_Q^p\end{array}\right].$$

Hence, inverting $\confusion$ and extracting the $q$th of the
resulting matrix equality gives that $\expectation\left\{{\V{z}}_{pq}\right\}=\mu_{q}^p$.

Equation~\eqref{eq:mumargin} is obtained thanks to
Assumption~\ref{ass:linear} combined with~\eqref{eq:muqp} and the linearity of the
expectation. Equation~\eqref{eq:muerror} is obtained thanks to the
definition~\eqref{eq:Ap} of $\setA{p}^{\alpha}$ (made of points that are predicted to be of
class $p$)  and the linearity of the expectation.\qed
\end{proof}

The attentive reader may notice that Proposition
\ref{prop:tilde_update2} or, equivalently, step~\ref{step:mistakeend}, is
precisely the reason for requiring $\confusion$ to be invertible, as the
computation of $\xup{pq}$ hinges on the resolution of a system of
equations based on $\confusion$. 

\begin{proposition}
\label{prop:zpqerror}
Let $\varepsilon>0$ and $\delta\in(0;1]$.
There exists a number $$
 n_0(\varepsilon, \delta, d, Q) = \bigO\left( \frac{1}{\varepsilon^2} \left[
\ln\frac{1}{\delta} + \ln Q + d\ln\frac{1}{\varepsilon}
\right] \right) $$
 such that if the number of training
samples is greater than $n_0$ then, with high probability
\begin{align}
&\langle\V{w}_q^*,\xup{pq}\rangle -
\langle\V{w}_k^*,\xup{pq}\rangle 
\geq \theta - \varepsilon
\label{eq:zmargin} \\
&\langle\V{w}_p,\xup{pq}\rangle -
\langle\V{w}_k,\xup{pq}\rangle\geq 0,\;\forall k\neq p.\label{eq:zerror}
\end{align}
\end{proposition}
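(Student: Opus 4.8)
The plan is to transfer to the empirical vector $\xup{pq}$ the two population-level properties of $\mu_q^p$ established in Proposition~\ref{prop:tilde_update2}, namely the margin condition~\eqref{eq:mumargin} with respect to $W^*$ and the predicted-as-$p$ condition~\eqref{eq:muerror} with respect to the current $W$. Since $\expectation_{\UKDistri_{XY}}\xup{pq}=\mu_q^p$, for any pair of weight vectors $\V{a},\V{b}$ we can write
\begin{equation*}
\langle\V{a}-\V{b},\xup{pq}\rangle=\langle\V{a}-\V{b},\mu_q^p\rangle+\langle\V{a}-\V{b},\xup{pq}-\mu_q^p\rangle,
\end{equation*}
and Cauchy--Schwarz bounds the last term by $\|\V{a}-\V{b}\|\,\|\xup{pq}-\mu_q^p\|$. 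Hence it suffices to show that $\|\xup{pq}-\mu_q^p\|$ is small with high probability: taking $(\V{a},\V{b})=(\V{w}_q^*,\V{w}_k^*)$ and using $\|W^*\|_F=1$ (so that $\|\V{w}_q^*-\V{w}_k^*\|\le\sqrt 2$) turns the margin $\theta$ of~\eqref{eq:mumargin} into the margin $\theta-\varepsilon$ of~\eqref{eq:zmargin}, while taking $(\V{a},\V{b})=(\V{w}_p,\V{w}_k)$ and spending the slack $\alpha$ of~\eqref{eq:muerror} yields~\eqref{eq:zerror} with margin $0$, provided $\|\xup{pq}-\mu_q^p\|$ stays below a threshold governed by $\varepsilon$, $\alpha$ and the relevant weight-vector norms.

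The second step is a concentration bound on $\|\xup{pq}-\mu_q^p\|$. From step~\ref{step:mistakeend} and Proposition~\ref{prop:tilde_update2},
\begin{equation*}
\xup{pq}-\mu_q^p=\bigl([\confusion^{-1}(\Gamma^p-\expectation\Gamma^p)]_q\bigr)^{\top},
\end{equation*}
so the triangle inequality gives $\|\xup{pq}-\mu_q^p\|\le\sum_{k}|[\confusion^{-1}]_{qk}|\,\|\gamma_k^p-\expectation\gamma_k^p\|$; the amplification factor $\sum_k|[\confusion^{-1}]_{qk}|$ is finite precisely because $\confusion$ is invertible (Assumption~\ref{ass:1}). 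It remains to control each $\|\gamma_k^p-\expectation\gamma_k^p\|$, where $\gamma_k^p$ is an empirical mean of the i.i.d. vectors $\indicator{Y_i=k}\indicator{X_i\in\setA{p}^{\alpha}}X_i^{\top}$, each of Euclidean norm at most $1$ since $\|X_i\|=1$.

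For the third step I would reduce the vector deviation to scalar ones: for a fixed direction $\V{u}$ on the unit sphere, $\langle\V{u},\gamma_k^p-\expectation\gamma_k^p\rangle$ is an average of bounded independent scalars, to which a Hoeffding/Bernstein inequality applies, and $\|\gamma_k^p-\expectation\gamma_k^p\|$ is recovered by a covering argument over the sphere. Crucially, the region $\setA{p}^{\alpha}$ is itself determined by the (data-dependent) matrix $W$ through an intersection of halfspaces, so a pointwise bound at a fixed $W$ is not enough: I would instead establish the deviation uniformly over the class of functions $\V{x}\mapsto\indicator{\V{x}\in A}\langle\V{u},\V{x}\rangle$ indexed by halfspace-intersections $A$ and directions $\V{u}$. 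This class has pseudo-dimension $\bigO(d)$, and the standard VC/pseudo-dimension uniform-convergence rate $\frac{1}{\eta^2}\!\left[d\ln\frac1\eta+\ln\frac1{\delta'}\right]$ then bounds $\|\gamma_k^p-\expectation\gamma_k^p\|$ by $\eta$ with probability $1-\delta'$. A union bound over $k\in\classes$ contributes the $\ln Q$ term (taking $\delta'=\delta/Q$), and choosing $\eta\asymp\varepsilon$ after absorbing the constants coming from $\confusion^{-1}$, from $\sqrt2$, from $\alpha$ and from the weight norms produces exactly $n_0=\bigO\bigl(\varepsilon^{-2}[\ln(1/\delta)+\ln Q+d\ln(1/\varepsilon)]\bigr)$.

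I expect the main obstacle to be precisely this uniform control over the data-dependent region $\setA{p}^{\alpha}$: replacing the naive fixed-$W$ concentration (which would only yield a bare $d$ term from an $\varepsilon$-net of the sphere) by a genuine uniform-convergence statement over halfspaces is what introduces the pseudo-dimension and hence the $d\ln(1/\varepsilon)$ factor, and it is the step that must be carried out with care. The remaining technical points---the vector-to-scalar reduction on the sphere and the explicit bookkeeping of the amplification by $\confusion^{-1}$ and by the weight-vector norms---are routine once the uniform deviation bound is in hand.
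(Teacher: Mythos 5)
Your proposal follows, in substance, the same route as the paper: the identical expectation-plus-deviation decomposition around $\mu_q^p$ (with Proposition~\ref{prop:tilde_update2} supplying the margin and error conditions in expectation), followed by a pseudo-dimension-based uniform-convergence argument to control the deviation \emph{uniformly} over $W$ --- and you correctly single out the data-dependence of $\setA{p}^{\alpha}$ as the crux, which is exactly what the paper's appendix (a four-step double-sampling argument: symmetrization, swapping permutations, covering numbers, Hoeffding) is designed to handle. Three execution differences are worth recording. (i) The paper never passes through $\norm{\xup{pq}-\mu_q^p}$: it builds the direction into the function class, $\bigF_{pq}=\{\V{x}\mapsto\dotProd{\V{w}_q-\V{w}_p}{\V{x}}\}$ with pseudo-dimension at most $d+1$, and bounds the scalar deviation $\vert\dotProd{\V{w}_p-\V{w}_q}{\xup{pq}}-\dotProd{\V{w}_p-\V{w}_q}{\mu_q^p}\vert$ directly, so your Cauchy--Schwarz step and the $\varepsilon$-net over the sphere are not needed. (ii) Your explicit propagation of the error through $\confusion^{-1}$, i.e. $\norm{\xup{pq}-\mu_q^p}\le\sum_k\vert[\confusion^{-1}]_{qk}\vert\,\norm{\gamma_k^p-\expectation\gamma_k^p}$, has no counterpart in the paper: the appendix simply identifies $\xup{pq}$ with the empirical average $\frac{1}{n}\sum_i\indicator{t(\V{x}_i)=q}\indicator{\V{x}_i\in\setA{p}^{\alpha}}\V{x}_i$, an identity that holds only in expectation, not realization by realization. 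Yours is the more faithful treatment of what $\xup{pq}$ actually is, and it surfaces something the paper's statement hides: the constants in $n_0$ must degrade with the conditioning of $\confusion$, through a factor such as $\max_q\sum_k\vert[\confusion^{-1}]_{qk}\vert$. (iii) Conversely, your claim that the class $\V{x}\mapsto\indicator{\V{x}\in A}\dotProd{\V{u}}{\V{x}}$, with $A$ ranging over intersections of $Q-1$ halfspaces, has pseudo-dimension $\bigO(d)$ is optimistic: genuine uniformity over such $A$ costs a factor of order $Q$ (the VC dimension of intersections of $Q-1$ halfspaces is $\bigO(dQ\log Q)$), which would inflate the $d\ln(1/\varepsilon)$ term in $n_0$. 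The paper avoids paying this factor by arguing --- admittedly loosely --- that each member of its indicator-augmented class $\H{\bigF_{pq}}$ corresponds to a single affine function and hence inherits the bound $d+1$; so on this one point your more honest accounting yields a slightly worse rate, while the paper's stated rate rests on the weaker of the two arguments.
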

\begin{proof} 
The existence of $n_0$ relies on 
pseudo-dimension arguments.
 We defer this
part of the proof to Appendix~\ref{apd:first} and
we will directly assume here that if $n \geq n_0$,
then, with probability $1 - \delta$ for any $\V{W}$, $\xup{pq}$.
\begin{align}
  \left| \dotProd{\V{w}_p - \V{w}_q}{\xup{pq}} -
  \dotProd{\V{w}_p - \V{w}_q}{\mupt{q}{p}} \right| \leq
 \varepsilon. \label{eq:zproof}
\end{align}
Proving \eqref{eq:zmargin} then proceeds by observing that
\begin{align*}
\dotProd{\V{w}_q^* - \V{w}_k^*}{\xup{pq}} = \dotProd{\V{w}_q^* - \V{w}_k^*}{\mupt{q}{p}}
+ \dotProd{\V{w}_q^* - \V{w}_k^*}{\xup{pq} - \mupt{q}{p}}
\end{align*}
bounding the first part using Proposition~\ref{prop:tilde_update2}:
$$\dotProd{\V{w}_q^* - \V{w}_k^*}{\mupt{q}{p}} \geq \theta
$$
and the second one with \eqref{eq:zproof}.
A similar reasoning allows us to get \eqref{eq:zerror} by setting
$\alpha \doteq \varepsilon$ in $\setA{p}^{\alpha}$ .\qed
\end{proof}
This last proposition essentially says that the update vectors ${\bf
  z}_{pq}$ that we compute are, with high probability, erred upon and
realize a margin condition $\theta - \varepsilon$.

Note that $\alpha$ is needed to cope with the imprecision incurred by the
use of empirical estimates.  Indeed, we can
only approximate $\langle\V{w}_p,\xup{pq}\rangle -
\langle\V{w}_k,\xup{pq}\rangle$ in \eqref{eq:zerror} up to a precision of
$\varepsilon$. Thus for the result to hold we need to have
$\langle\V{w}_p,\mu_{q}^p\rangle - \langle\V{w}_k,\mu_{q}^p\rangle \geq \varepsilon$
which is obtained from \eqref{eq:muerror} when $\alpha = \varepsilon$.
In practice, this just says that the points used in the computation of $\xup{pq}$
are at a distance at least $\alpha$ from any decision boundaries.

\begin{remark}
\label{rem:alpha}
It is important to understand that the parameter $\alpha$ helps us
derive sample complexity results by allowing us to retrieve a 
linearly separable training dataset with {\em positive} margin 
from the noisy dataset. The theoretical results we prove hold for any
   such $\alpha>0$ parameter and the smaller this parameter, the larger the sample complexity, {\em i.e.}, the harder it is for the algorithm to take advantage 
   of a training samples that meets the sample complexity requirements. In other 
   words, the smaller $\alpha$, the less likely it is for \uma to succeed; yet,
    as shown in the experiments, where we use $\alpha=0$, \uma continues to
    perform quite well.     
\end{remark}

\subsection{Convergence and Stopping Criterion}

We arrive at our main result, which provides both
convergence and a stopping criterion. 
\begin{proposition}
\label{prop:convergence}
Under Assumptions \ref{ass:margin}, \ref{ass:law} and \ref{ass:1} there exists a
number $n$, polynomial in $d, 1/\theta, Q, 1/\delta$, such that if the 
training sample is of size at least $n$, then, with high probability ($1 -
\delta$), \uma makes at most $\bigO(1/{\theta}^2)$ updates.
\end{proposition}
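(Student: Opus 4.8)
The plan is to combine the mistake bound for ultraconservative algorithms (Theorem~\ref{th:ua}) with the high-probability guarantees on the update vectors $\xup{pq}$ established in Proposition~\ref{prop:zpqerror}. The key observation is that \uma is, structurally, an ultraconservative additive procedure in which the update points are the computed $\xup{pq}$ vectors rather than raw training examples. Proposition~\ref{prop:zpqerror} tells us that, once $n \geq n_0$, each $\xup{pq}$ satisfies---with probability $1-\delta$---the margin condition $\langle\V{w}_q^* - \V{w}_k^*, \xup{pq}\rangle \geq \theta - \varepsilon$ with respect to $W^*$, and simultaneously triggers an error (inequality~\eqref{eq:zerror}), so that the ultraconservative update in step~\ref{step:uc1}--\ref{step:uc2} is genuinely applied. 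Thus each $\xup{pq}$ behaves exactly like a mislabelled-but-separable example feeding a standard ultraconservative update, and the Block--Novikoff-style argument underlying Theorem~\ref{th:ua} applies verbatim with $\theta$ replaced by the effective margin $\theta - \varepsilon$.

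First I would fix $\varepsilon$ to be a fixed fraction of $\theta$, say $\varepsilon = \theta/2$, so that the effective margin $\theta - \varepsilon = \theta/2$ remains bounded below by a constant multiple of $\theta$. With this choice, the mistake-bound argument of Theorem~\ref{th:ua} yields that the number of updates is at most $2/(\theta/2)^2 = 8/\theta^2 = \bigO(1/\theta^2)$. This requires re-running the potential/norm-growth argument of the underlying theorem: the inner product $\langle W^*, W\rangle$ increases by at least the effective margin at each update (using~\eqref{eq:zmargin}), while the Frobenius norm $\fro{W}^2$ grows by a controlled amount at each update because the $\tau_r$ satisfy~\eqref{eq:tau} and $\|\xup{pq}\|$ is bounded; pitting the linear lower bound against the square-root-type upper bound via Cauchy--Schwarz caps the update count. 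Since every $\xup{pq}$ realizes a positive margin with respect to the \emph{same} fixed $W^*$, the separator witnessing Assumption~\ref{ass:margin} serves uniformly across all iterations.

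The subtle point---and the main obstacle---is the handling of the probabilistic guarantee across the many update steps. Proposition~\ref{prop:zpqerror} guarantees~\eqref{eq:zmargin}--\eqref{eq:zerror} for a \emph{fixed} $W$ with probability $1-\delta$, but \uma generates a \emph{sequence} of weight matrices $W$, each depending on the data, and the number of updates is itself random. I would address this by invoking the uniform-in-$W$ nature of the pseudo-dimension bound deferred to Appendix~\ref{apd:first}: the concentration inequality~\eqref{eq:zproof} holds simultaneously for all $\V{W}$ once $n \geq n_0$, so a single draw of a sufficiently large sample makes the margin/error guarantees valid at every iteration at once. This converts the per-step statement into a global one and removes any need for a union bound over an unbounded number of steps. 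One then translates $n_0(\varepsilon,\delta,d,Q)$ with $\varepsilon = \theta/2$ into the claimed polynomial dependence on $d, 1/\theta, Q, 1/\delta$.

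Finally, I would tie the bound to the stopping criterion of step~\ref{step:criterion}. Since the update count is capped at $\bigO(1/\theta^2)$ with probability $1-\delta$, the loop terminates after finitely many updates; the criterion $\|\xup{pq}\|$ being too small signals that no further class pair $(p,q)$ yields a sufficiently massive update region $\setA{p}^{\alpha}$, which is consistent with having exhausted the erring update vectors. Substituting $\varepsilon = \theta/2$ into $n_0$ from Proposition~\ref{prop:zpqerror} gives a sample size $n$ polynomial in $d, 1/\theta, Q, 1/\delta$, completing the argument. The only genuinely delicate step remains the uniform-convergence transfer described above; everything else is a direct transcription of the classical ultraconservative mistake-bound proof with the degraded margin $\theta - \varepsilon$.
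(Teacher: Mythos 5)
Your proposal is correct and follows essentially the same route as the paper's proof: treat the generated update vectors $\xup{pq}$ as a dataset that, by Proposition~\ref{prop:zpqerror}, is with high probability separated by $W^*$ with margin $\theta-\varepsilon$ and erred upon by the current classifier, then invoke Theorem~\ref{th:ua} to cap the number of updates at $\bigO(1/\theta^2)$. You are in fact more explicit than the paper on two points it glosses over---fixing $\varepsilon$ as a constant fraction of $\theta$ so the degraded margin stays of order $\theta$, and appealing to the uniform-in-$W$ concentration of Appendix~\ref{apd:first} to handle the data-dependent sequence of weight matrices.
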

\begin{proof}
Let $\inputset_{\xup{}}$ the set of all the update vectors $\xup{pq}$
generated during the execution of \uma and labeled with their \emph{true} class $q$.
Observe that, in this context, \uma (Alg. \ref{alg:uma}) behaves like  a
regular ultraconservative algorithm run on $\inputset_{\xup{}}$. Namely: a) lines
\ref{step:mistakestart} through \ref{step:mistakeend} compute a new point in
$\inputset_{\xup{}}$, and b) lines \ref{step:uc1} through \ref{step:uc2} perform an
ultraconservative update step.

From Proposition~\ref{prop:zpqerror}, we know that with high probability, $w^*$ is a classifier
with positive margin $\theta - \varepsilon$ on $\inputset_{\xup{}}$ and it comes from
Theorem~\ref{th:ua} that \uma does not make more than
$\bigO(1/{\theta}^2)$ mistakes on such dataset.

Because, by construction, we have that with high probability each element of $\inputset_{\xup{}}$ is erred upon
then $\vert \inputset_{\xup{}} \vert \in
\bigO(1/{\theta}^2)$; that means that, with high probability, \uma does not make
more than $\bigO(1/{\theta}^2)$ updates.

All in all, after $\bigO(1/{\theta}^2)$ updates, there is a high
probability that we are not able to construct examples on which \uma makes a mistake or,
equivalently, the conditional misclassification errors $\proba(f_{W}(X)=p|Y=q)$
are all small. \qed
\end{proof}

Even though \uma operates in a batch setting, it `internally' simulates the
execution of an online algorithm that encounters a new training point
(\changeok{$\inputset_{\xup{}}$}{$\xup{pq} \in \inputset_{\xup{}}$}) at each time
step. To more precisely see how \uma can be seen as an online algorithm, it
suffices to imagine it be run in a way where each vector update is made after a
chunk of $n$ (where $n$ is as in Proposition~\ref{prop:convergence}) training
data has been encountered and used to compute \changeok{}{the next element of}
$\inputset_{\xup{}}$.
Repeating this process  $\bigO(1/{\theta}^2)$ times then guarantees convergence
with high probability. Note that, in this scenario, \uma requires $n' =
\bigO(n/{\theta}^2)$ data to converge which might be far more than the sample
complexity exhibited in Proposition~\ref{prop:convergence}. Nonetheless, $n'$
still remains polynomial in $d$, $1/{\theta}$, $Q$ and $1/{\delta}$. For more
detail on this (online to batch conversion) approach, we refer the interested
readers to \cite{blum96polynomialtime}.

\subsection{Selecting $p$ and $q$} \label{sec:pqselect}

So far, the question of 
selecting good pairs of values $p$ and $q$ to perform updates has been
left unanswered. Indeed, our results hold for \emph{any} pair $(p,q)$ and
convergence is guaranteed even when $p$ and $q$ are arbitrarily selected as long
as $\xup{pq}$ is not $\mathbf{0}$. Nonetheless, it is reasonable to use
heuristics for selecting $p$ and $q$ with the hope that it might 
improve the practical convergence speed.

 On the one hand, we may focus on the pairs $(p,q)$ for which the 
empirical misclassification rate
\begin{equation}
\hat{\proba}_{X\sim\trainingset}\left\{f_W(X) \neq
t(X)\right\}\doteq\frac{1}{n}\sum_{i=1}^n\indicator{f_W(\V{x}_i)\neq
t(\V{x}_j)}
\label{eq:empirical_misclassification}
\end{equation}
is the highest ($X\sim\trainingset$
  means that $X$ is randomly drawn from the uniform distribution of
  law $\V{x}\mapsto n^{-1}\sum_{i=1}^n\indicator{\V{x}=\V{x}_i}$
  defined with respect to training set $\trainingset=\{(\V{x}_i,y_i)\}_{i=1}^n$). We want to favor those pairs $(p,q)$
because, i) the induced update may lead to a greater reduction of the error and
ii) more importantly, because $\xup{pq}$ may be more reliable, as
$\setA{p}^{\alpha}$ will be bigger.

On the other hand, recent advances in the passive aggressive
literature \cite{Ralaivola12}  have
emphasized the importance of minimizing the empirical confusion rate,
given for a pair $(p,q)$ by the quantity
\begin{equation}
\hat{\proba}_{X\sim\trainingset}\left\{f_W(X)=p|t(X)=q\right\}\doteq\frac{1}{n_q}\sum_{i=1}^n\indicator{t(\V{x}_i)=q,
f_W(\V{x}_i)=p},
\label{eq:empirical_confusion}
\end{equation}
where
$$n_q\doteq\sum_{i=1}^n\indicator{t(\V{x}_i)=q}.$$
This approach is especially worthy when dealing
with imbalanced classes and one might want to optimize the selection of $(p,q)$
with respect to  the confusion rate. 

Obviously, since the true labels in the training data cannot be
accessed, neither of the quantities defined
in~\eqref{eq:empirical_misclassification}
and~\eqref{eq:empirical_confusion} can be computed.  Using a
result provided in~\cite{blum96polynomialtime}, which states that the
norm of an update vector computed as ${\bf z}_{pq}$ directly provides an
estimate of~\eqref{eq:empirical_misclassification}, we devise two possible strategies for selecting $(p,q)$:
\begin{align}
  (p,q)_{\terror} &\doteq \argmax_{(p,q)} \|{\bf z}_{pq}\|\\
  (p,q)_{\tconf} &\doteq \argmax_{(p,q)}\frac{\|{\bf
      z}_{pq}\|}{\hat{\pi}_q},
\end{align}
where $\hat{\pi}_q$ is the estimated proportion of examples of true
class $q$ in the training sample. In a way similar to the computation of
${\bf z}_{pq}$ in Algorithm~\ref{alg:uma}, $\hat{\pi}_q$ may be
estimated as follows:
$$\hat{\pi}_q=\frac{1}{n}[\confusion^{-1}{\hat{\bfy}}]_q,$$
where $\hat{\bfy}\in\realset^Q$ is the vector containing the number of
examples from $\inputset$ having noisy labels $1,\ldots,Q$, respectively.

The second selection criterion is intended to normalize the number of
errors with respect to the proportions of different classes and aims at
being robust to imbalanced data. Our goal here is to provide a way to take into
account the class distribution for the selection of $(p,q)$. Note that
this might be  a first step towards transforming \uma into an algorithm 
for minimizing the confusion risk, even though additional (and significant) 
work is required to provably provide \uma with this feature.

On a final note, we remark that $(p,q)_{\tconf}$ requires
additional precautions when used: when $(p,q)_{\terror}$ is implemented,
$\xup{pq}$ is guaranteed to be the update vector of maximum norm among all
possible update vectors, whereas this no longer holds true when
$(p,q)_{\tconf}$ is used and if $\xup{pq}$ is close to $\mathbf{0}$ then
there may exist another possibly more informative---from the standpoint of
convergence speed---update vector $\xup{p'q'}$ for some $(p',q')\neq(p,q).$

\subsection{\uma and Kernels}
\label{sec:umakernels}
Thus far, we have only considered the situation where linear
classifiers are learned. There are however many
learning problems that cannot be handled effectively without going
beyond linear classification. A popular strategy to deal with such a
situation is obviously to make use of kernels~\cite{schoelkopf02learning}. In
this direction, there are (at least) two paths that can be taken. The
first one is to revisit \uma and provide a kernelized algorithm based
on a dual representation of the weight vectors, as is done with the kernel Perceptron (see~\cite{cristianini00introduction}) or
its close cousins (see,
{\em e.g.} \cite{cristianini98kernel-adatron,Dekel05theforgetron,freund99large}). Doing
so would entail the question of finding sparse expansions of
the weight vectors with respect to the training data in order to contain
the prediction time and to derive generalization guarantees based on
such sparsity: this is an interesting and ambitious research program
on its own. A second strategy, which we make use of in the
numerical simulations, is simply to build upon the idea of Kernel
Projection Machines~\cite{blanchard08finite,conf/cvpr/TakerkartR11}:
first, perform a Kernel Principal Component Analysis (shorthanded as
kernel-\pca afterwards) with $D$ principal axes, second, project the
data onto the principal $D$-dimensional subspace and, finally, run
\uma on the obtained data. The availability of numerous methods to
efficiently extract the principal subspaces (or approximation thereof) \cite{bach02kernel,DrineasKM06SIAM,drineas05nystrom,stempfel07learning,williams01nystrom}
makes this path a viable strategy to render \uma  usable for
nonlinearly separable concepts. This explains why we decided to use
this strategy in the present paper.

%%% Local Variables: 
%%% mode: latex
%%% TeX-master: "unconfused"
%%% End: 
 %aka Algo
%!TEX root=unconfused.tex
\section{Experiments} \label{sec:Expe}

In this section, we present results from numerical
simulations of our approach and we
discuss different practical aspects of \uma. The ultraconservative
step sizes retained are those corresponding to a regular Perceptron:
$\tau_p=-1$ and $\tau_q=+1$, the other values of $\tau_r$ being equal
to $0$.

Section \ref{sec:expe:toy} discusses robustness results, based
on simulations conducted on synthetic data while 
Section \ref{sec:expe:reuters} takes it a step further and
evaluates our algorithm on real data, with a realistic
noise process related to Example~\ref{ex:redwire} (cf. Section~\ref{sec:intro}).

We essentially use what we call the {\em confusion rate} as a performance
measure, which is :
\[
\frac{1}{\sqrt{Q}} \fro{\H{\Conf}}
\]
Where $\fro{\H{\Conf}}$ is the Frobenius norm of the confusion
matrix $\H{\Conf}$ computed on a test set $S_{\text{test}}$  (independent from
the training set), {\em i.e.}: $$\fro{\H{\Conf}}^2 = \sum_{i,j}
\H{\Conf}_{ij}^2,\text{ with }\H{\Conf}_{pq} \doteq\left\{\begin{array}{ll}
0 &\text{ if } p=q,\\
\displaystyle\frac{\sum_{\V{x}_i \in S_{\text{test}}} \indicator{\H{y}_i = p
\text{ and } t_i
= q }} {\sum_{\V{x}_i \in S_{\text{test}}} \indicator{t_i = q}}&\text{
otherwise,}\end{array}\right.$$ with $\widehat{y}_i$ the label predicted
for the test instance ${\bf x}_i$ by the learned predictor. $\H{\Conf}$ is much
akin to a recall matrix, \changeok{}{and the $1/\sqrt{Q}$ factor ensure that
the confusion rate is comprised within $0$ and $1$}.

\subsection{Toy dataset} \label{sec:expe:toy}

We use a $10$-class dataset with a total of roughly $1,000$
$2$-dimensional examples uniformly distributed according to
$\uniform$, which is the uniform distribution over the unit circle
centered at the origin.
Labelling is achieved according to \eqref{eq:linearprediction} given a 
set of $10$ weight vectors $\V{w}_1,\ldots,\V{w}_{10}$, which are also randomly generated  according to  $\uniform$;
all these weight vectors have therefore norm $1$.
%That way, each weight vector $\V{w}_i$, for $i=1,\ldots,10$, has norm $1$, 
%which will prevent too high (and artificial) discrepancies between
%the margins between the different classes at hand. 
A margin
$\margin = 0.025$ is enforced in the generated data by removing examples
that are too close to the decision boundaries---practically, with this value of
$\margin$, the case
where three classes are so close to each other that
no training example from one of the classes remained after enforcing the margin
never occurred.

The learned classifiers are tested against a dataset
of $10,000$ points that are distributed according to the training distribution.
 The results reported in the tables and graphics are averaged over $10$ runs.

The noise is generated from the sole confusion
matrix. This
situation can be tough to handle and is rarely met with real
data but we stick with it as it is a good example of a
worst-case scenario.

\paragraph{Robustness to noise.} We first (Fig.~\ref{fig:noise_robustness})
evaluate the robustness to noise of \uma by running our algorithm with various confusion matrices. 
We uniformly draw a reference nonnegative square matrix $M$,
the rows of $M$ are then normalized, {\em i.e.} each entry of $M$ is divided by the
sum of the elements of its row, so $M$ is a stochastic matrix. If $M$ is not invertible  it is
rejected and we draw a new matrix until we have an invertible one. Then, we
define $N$ such that $N = {(M - I)}/10$, where $I$ is the identity matrix of
order $Q$; typically $N$ has nonpositive diagonal entries and nonnegative
off-diagonal coefficients. We will use $N$ to parametrize a family of confusion
matrices that have their most dominant coefficient to move from their diagonal
to their off-diagonal parts. Namely, we run \uma $20$ times with confusion matrices
$\confusion\in\{\confusion_i\doteq \Omega(I + iN)\}_{i=1}^{20}$, where $\Omega$
is a matrix operator which outputs a (row-)stochastic matrix: when applied on
matrix $A$, $\Omega$ replaces the negative elements of $A$ by zeros and it
normalizes the rows of the obtained matrix; note that $i = 10$ corresponds to
the case where $\confusion = M$. Equivalently, one can think of $\confusion_i$
as the weighted average between $I$ and $\Omega(N)$ where $I$ has a constant
weight of $1$ and $\Omega(N)$ is weighted by $i$. Note that, after some point,
further increasing $i$ has little effect on $\confusion_i$ as it eventually converges to
$\Omega(N)$. Figure \ref{fig:noise_robustness} plots our results against the
Frobenius norm of the diagonal-free confusion matrix $\confusion$, that is:
$\norm{\confusion - \texttt{diag}(\confusion)}_F$ where
$\texttt{diag}(\confusion)$ denotes the diagonal matrix with the same diagonal values
as $\confusion$. \changeok{}{For the sake of comparison, we also have run \uma with a fixed
confusion matrix $\Conf = I$ on the same data. This amounts to running a
Perceptron through the data multiple times and it allows us to have a baseline for measuring
the improvement induced by the use of the confusion matrix.
}

\paragraph{Robustness to the incorrect estimation of the confusion matrix.}  
The
second experiment (Fig.~\ref{fig:noise_estimation}) evaluates the robustness of
\uma to the use of a confusion matrix that is not exactly the confusion matrix
that describes the noise process corrupting the data; this will allow us to
measure the extent to which a confusion matrix (inaccurately) estimated from the 
training data can be dealt with by \uma. Using the
same notation as before, and the same idea of generating a random stochastic
reference matrix $M$, we proceed as follows:  we use the given matrix $M$
to corrupt the noise-free dataset and then, each confusion matrix from the
family  $\{\Conf_i\}_{i=1}^{20}$ is fed to \uma as if it were the confusion
matrix governing the noise process.
We introduce the notion of {\em approximation} factor $\rho$ as $\rho(i)\doteq
1-i/10$, so that $\rho$ takes values in the set $\{-1,-0.9,\ldots,0.9\}$. As
reference, the limit case where $\rho = 1$---that is, $i = 0$---corresponds
to the case where \uma is fed with the identity matrix $I$, effectively being
oblivious of any noise in the training set. More generally, the values of
$\confusion$ are being shifted away from the diagonal
as $\rho$ decreases, the equilibrium point being $\rho = 0$ where $\confusion$ is
equal to the \emph{true} confusion matrix $M$. Consequently, a positive (resp.
negative) approximation factor means that the noise is underestimated (resp.
overestimated), in the sense that the noise process described by $\confusion$
would corrupt a lower (resp. higher) fraction of labels from each class than the \emph{true}
noise process applied on the training set, and corresponding to $M$.
Figure~\ref{fig:noise_estimation} plots the confusion rate against this approximation factor.

\begin{figure}
  \centering
  \subfigure[Robustness to noise\label{fig:noise_robustness}]
{
%    \centering
    \includegraphics[width=0.48\textwidth]{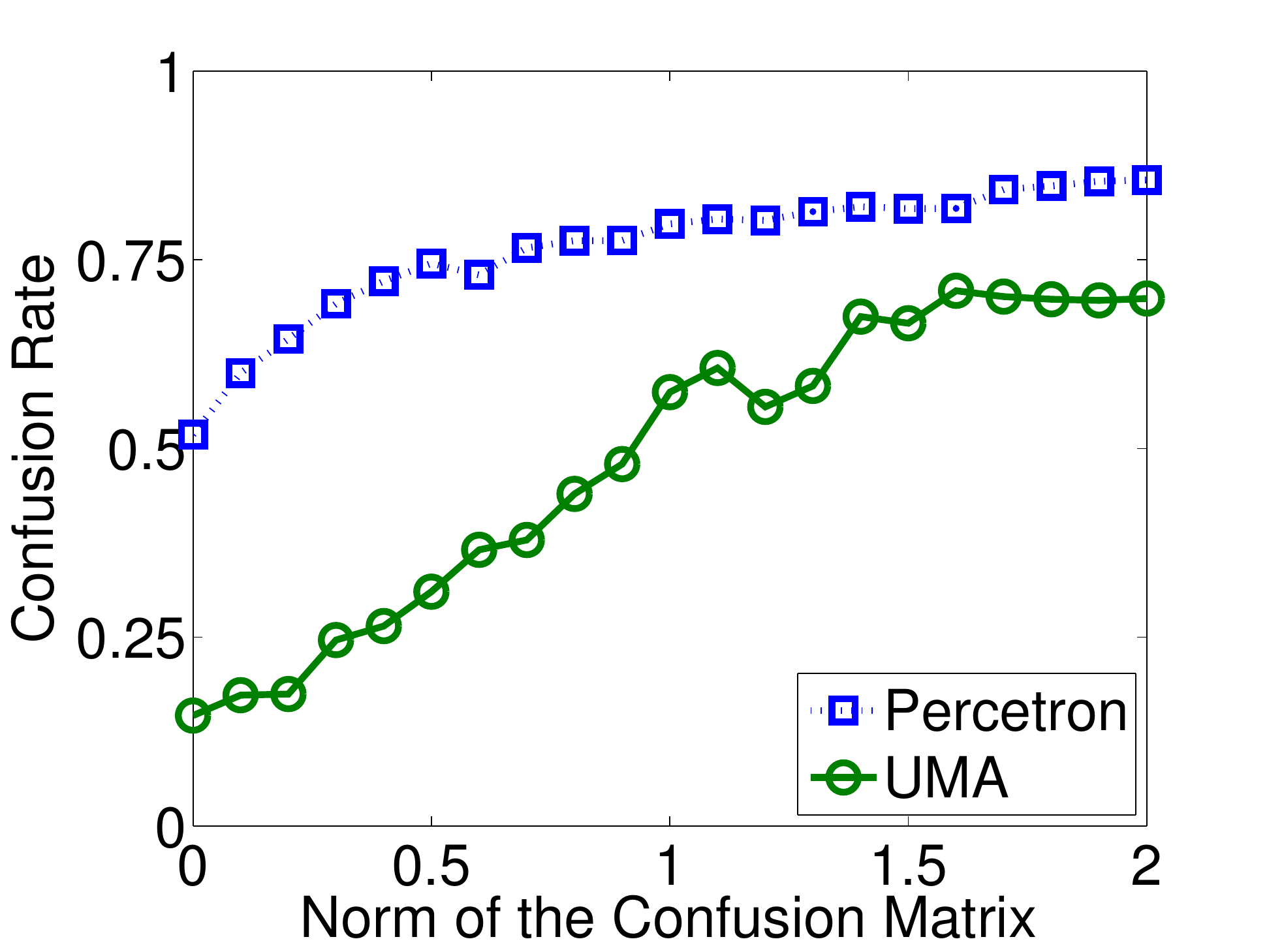}
} 
\subfigure[Robustness to noise estimation\label{fig:noise_estimation}]
{
    \includegraphics[width=0.48\textwidth]{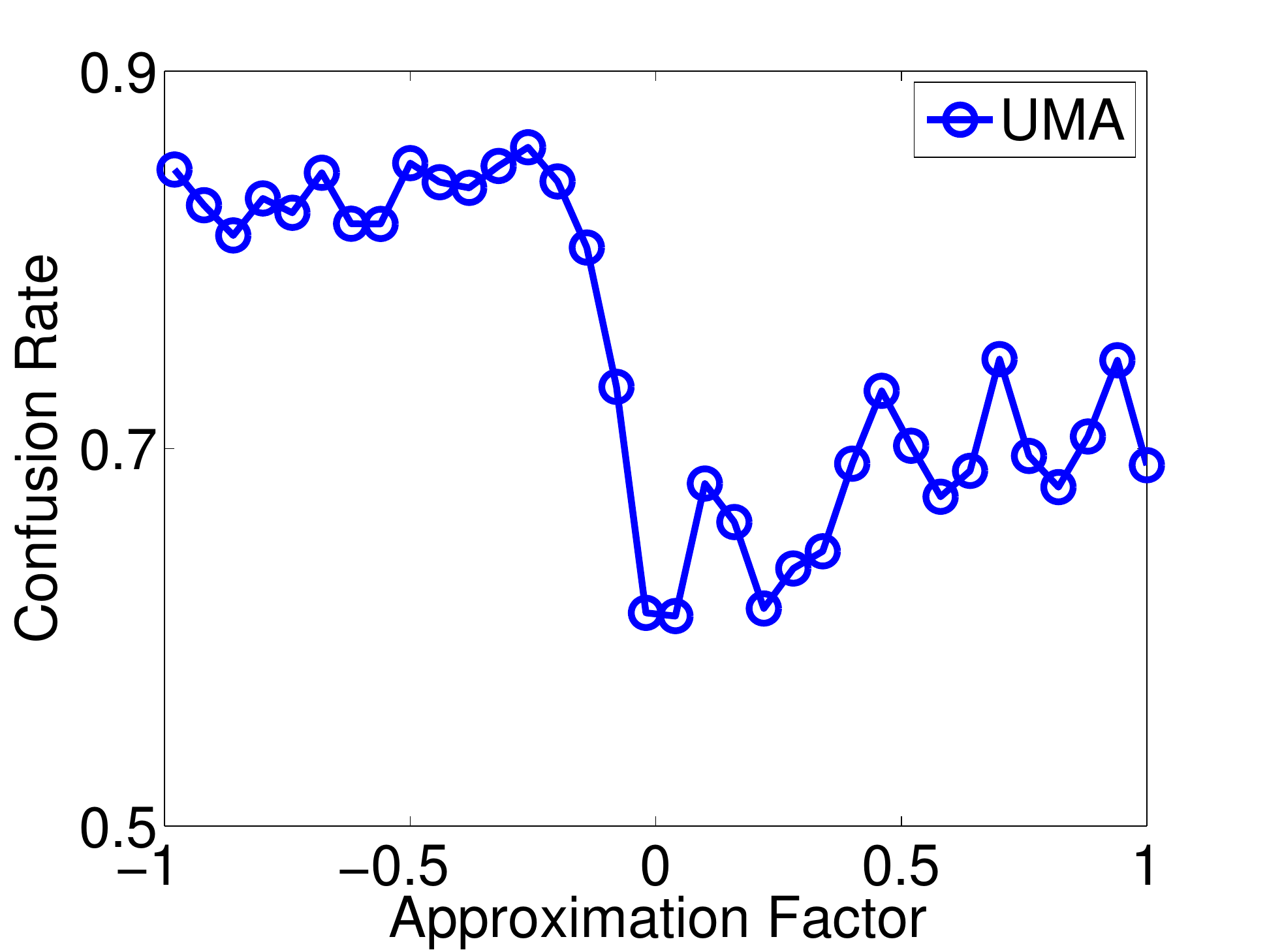}
}
  \caption{ (a) evolution of the confusion rate (y-axis)
for different noise
    levels (x-axis); (b) evolution of the same quantity with 
respect to errors in the confusion matrix $\Conf$ (x-axis) measured by the approximation factor (see text).}
\end{figure}

On Figure~\ref{fig:noise_robustness} we observe that \uma clearly provides improvement over the
Perceptron algorithm for every noise level tested, as it achieves lower confusion
rates. Nonetheless, its performance degrades as the noise level increases, going
from a confusion rate of $0.5$ for small noise levels---that is, when
$\norm{\confusion - \texttt{diag}(\confusion)}_F$ is small---to roughly
$2.25$ when the noise is the strongest.
Comparatively, the Perceptron algorithm follows the same trend, but with higher confusion rate, ranging from $1.7$ to $2.75$.

The second
simulation (Fig. \ref{fig:noise_estimation}) points out that, in addition to
being robust to the noise process itself,  \uma is also robust to underestimated
(approximation factor $\rho > 0$) noise levels, but not to overestimated (approximation factor $\rho
< 0$) noise levels.
Unsurprisingly, the best confusion rate corresponds to an approximation factor of $0$, which means that \uma is
using the true confusion matrix and can achieve a confusion rate as low as
$1.8$. There is a clear gap between positive and negative approximation
factors, the former yielding confusion rates around $2.6$ while the latter's are
slightly lower, around $2.15$. From these observations, it is clear that the
approximation factor has a major influence on the performances of \uma. 
%when the true confusion matrix is not available.

\subsection{Real data}
\label{sec:expe:reuters}

\subsubsection{Experimental Protocol} \label{sec:modus}
In addition to the results on synthetic data, we also perform simulations in a
realistic learning scenario. In this section we are going to assume that
labelling examples is very expensive and we implement the strategy
evoked in Example~\ref{ex:redwire}. More precisely, for a given dataset
$\trainingset$, proceed as follows:
\begin{enumerate}
  \item Ask for a small number $m$ of examples for each of the $Q$ classes.
  \item Learn a rough classifier\footnote{For the sake of self-containedness,
  we use \uma for this task (with $\Conf$ being the identity matrix). Remind
  that, when used this way, \uma acts as a regular Perceptron algorithm} $g$
  from these $Q \times m$ points.
  \item Estimate the confusion $\confusion$ of $g$ on a small labelled subset $\trainingset_{\text{conf}}$
  of $\trainingset$.
  \item Predict the missing labels $\bfy$ of $\trainingset$ using $g$; thus, $\bfy$ is  
  a sequence of noisy labels.
  \item Learn the final classifier $f_{\uma}$ from $\trainingset$, $\bfy$,
  $\confusion$ and measure its error rate.
\end{enumerate}
One might wonder why we do not simply sample a very small portion of
$\trainingset$ in the first step. The reason is that in the case of very
uneven classes proportions some of the classes may be missing in this
first sampling. This is problematic when estimating $\confusion$ as it leads
to a non-invertible confusion matrix.
Moreover, the purpose of $g$ is only to provide a baseline for the computation
of $\bfy$, hence tweaking the class (im)balance in this step is not a problem.

In order to put our results into perspective, we compare them with
results obtained from various
algorithms. This allows us to give a precise idea of the benefits and
 limitations of \uma. Namely, we learn four additional classifiers: $f_{\bfy}$ is a
regular Perceptron learned on $\trainingset$ labelled with noisy labels $\bfy$, $f_{\text{conf}}$
and $f_{\text{full}}$ are trained with the correctly labelled training sets $\trainingset_{\text{conf}}$ and
$\trainingset$ respectively and, lastly, $f_{\SSVM}$ is a classifier produced by
a multiclass semi-supervised \emph{SVM} algorithm (\SSVM, \cite{Bennett}) run on
$\trainingset$ where only the labels of $\trainingset_{\text{conf}}$ are provided.
The performances achieved by $f_{\bfy}$ and $f_{\text{full}}$ provide bounds for \uma's error rates: on the one hand,
$f_{\bfy}$ corresponds to a worst-case situation, as we simply ignore the confusion matrix
and use the regular Perceptron instead---arguably, \uma should perform better
than this---; on the other hand,
$f_{\text{full}}$ represents the best-case scenario for learning, when all the correct labels are
available---the performance of $f_{\text{full}}$ should always top that of \uma (and the performances of other classifiers).
The last two classifiers, $f_{\text{conf}}$ and $f_{\text{\SSVM}}$, provide us with
objective comparison measures. They are learned from the same data as \uma but use
them differently: $f_{\text{conf}}$ is learned from the reduced training set
$\trainingset_{\tconf}$ and $f_{\text{\SSVM}}$ is output by a semi-supervised learning strategy that infers both $f_{\text{\SSVM}}$ and the missing labels of $\trainingset$ and it totally ignores the predictions $\bfy$ made by $g$.
Note
that according to the learning scenario we implement, we assume $\confusion$ to be
estimated from raw data. This might not always be the case with real-world problems and
$\confusion$ might be easier and/or less expensive to get than raw data; for instance, 
it might be deduced from expert knowledge on the studied domain. In that case, $f_{\text{conf}}$ and $f_{\text{\SSVM}}$ may suffer from not taking full advantage of
the accurate information about the confusion. 
%Therefore, our results act as bottom line
%comparison on how \UMA might perform. In practice though, the ability to
%directly use the confusion matrix without additional knowledge may allow to
%solve new problems and should not be overlooked.

\subsubsection{Datasets}
Our simulations are conducted on three different datasets. Each one with
different features. For the sake of reproducibility, we used datasets
that can be easily found on the \emph{UCI Machine learning repository}~\cite{Bache+Lichman:2013}.
Moreover, these datasets correspond to tasks for which generating a complete,
labelled, training set is typically costly because of the necessity of human
supervision and subject to classification noise. The datasets used and their main features are as follows.

\paragraph{Optical Recognition of Handwritten Digits.}
This well-known dataset is composed of $8\times 8$  grey-level images of
handwritten digits, ranging from $0$ to $9$. The dataset is composed of $3,823$
images of $64$ features for training, and $1,797$ for the test phase. We set $m$ to $10$
for this dataset, which means that $g$ is learned from $100$ examples only. 
$\trainingset_{\text{conf}}$ is a sampling of $5\%$ of $\trainingset$. The
classes are evenly distributed (see Figure~\ref{fig:handwritten}). \changeok{We
make use of a Gaussian kernel, and in the cases of the Perceptron and $\uma$ we
use a Kernel-\pca (see Section~\ref{sec:umakernels}) to (nonlinearly)
reduce the dimensionality of the data to $640$.}{We handle the nonlinearity through
the use of a Gaussian kernel-\pca (see section~\ref{sec:umakernels}) to
project the data onto a feature space of dimension $640$.}

\paragraph{Letter Recognition.}
The Letter Recognition dataset is another well-known pattern recognition
dataset. The images of the letters are summarized into a vector of $16$ attributes, which
correspond to various primitives computed on the raw data. With $20,000$
examples, this dataset is much larger than the previous
one. As for the Handwritten Digits dataset, the examples are evenly spread across the $26$ classes (see Figure~\ref{fig:letter}). We
uniformly select $15,000$ examples for training and the remaining $5,000$ are
used for test. We set $m$ to $50$ as it seems that smaller values do not
yield usable confusion matrices.
We again sample $5\%$ of the dataset to form $\trainingset_{\text{conf}}$
\changeok{and use a Gaussian kernel; as before, a kernel-\pca is performed for
the Perceptron and \uma to (nonlinearly) reduce the dimension of the data to
$1,600$.}{and use, as before, a Gaussian kernel-based Kernel-\pca to (nonlinearly)
expand the dimension of the data to
$1,600$.}

\paragraph{Reuters.}
The Reuters dataset is a nearly linearly-separable document categorization
dataset of more than $300,000$ instances of nearly $47,000$ features each. For
size reasons we restrict ourselves to roughly $15,000$ examples for training,
and $15,000$ other for test.
It occurs that some classes are so underrepresented that they are flooded by
the noise process and/or do not appear in $\trainingset_{\text{conf}}$, which may
lead to a non-invertible confusion matrix. We therefore restrict the dataset
to the $9$ largest classes. One might wonder whether doing so erases class imbalance.
This is not the case as, even this way, the least represented class accounts for
roughly $500$ examples while this number reaches nearly $4,000$ for the most
represented one (see Figure~\ref{fig:reuters}).
Actually, these $9$ classes represent more than $70$ percent of the dataset,
reducing the training and test sets to approximately $11,000$ examples each.
We do not use any kernel for this dataset, the data being already
near to linearly-separable.
Also, we sample $\trainingset_{\text{conf}}$ on $5\%$ of the training set and we
set $m = 20$.

\begin{figure}
  \centering
  \subfigure[Handwritten Digits]
{
%    \centering
    \includegraphics[width=0.30\textwidth]{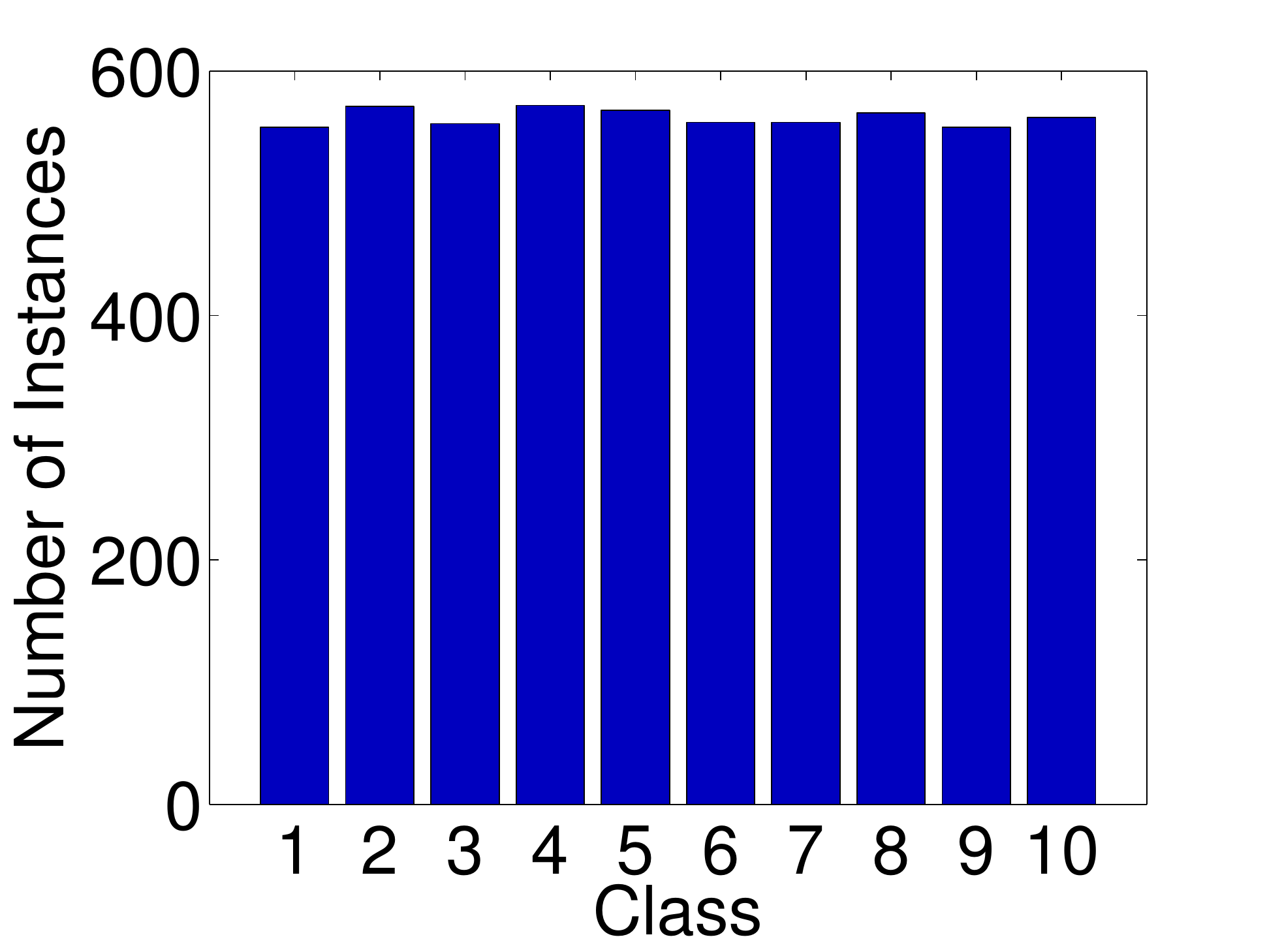}
    \label{fig:handwritten}
} 
\subfigure[Letter Recognition]
{
    \includegraphics[width=0.30\textwidth]{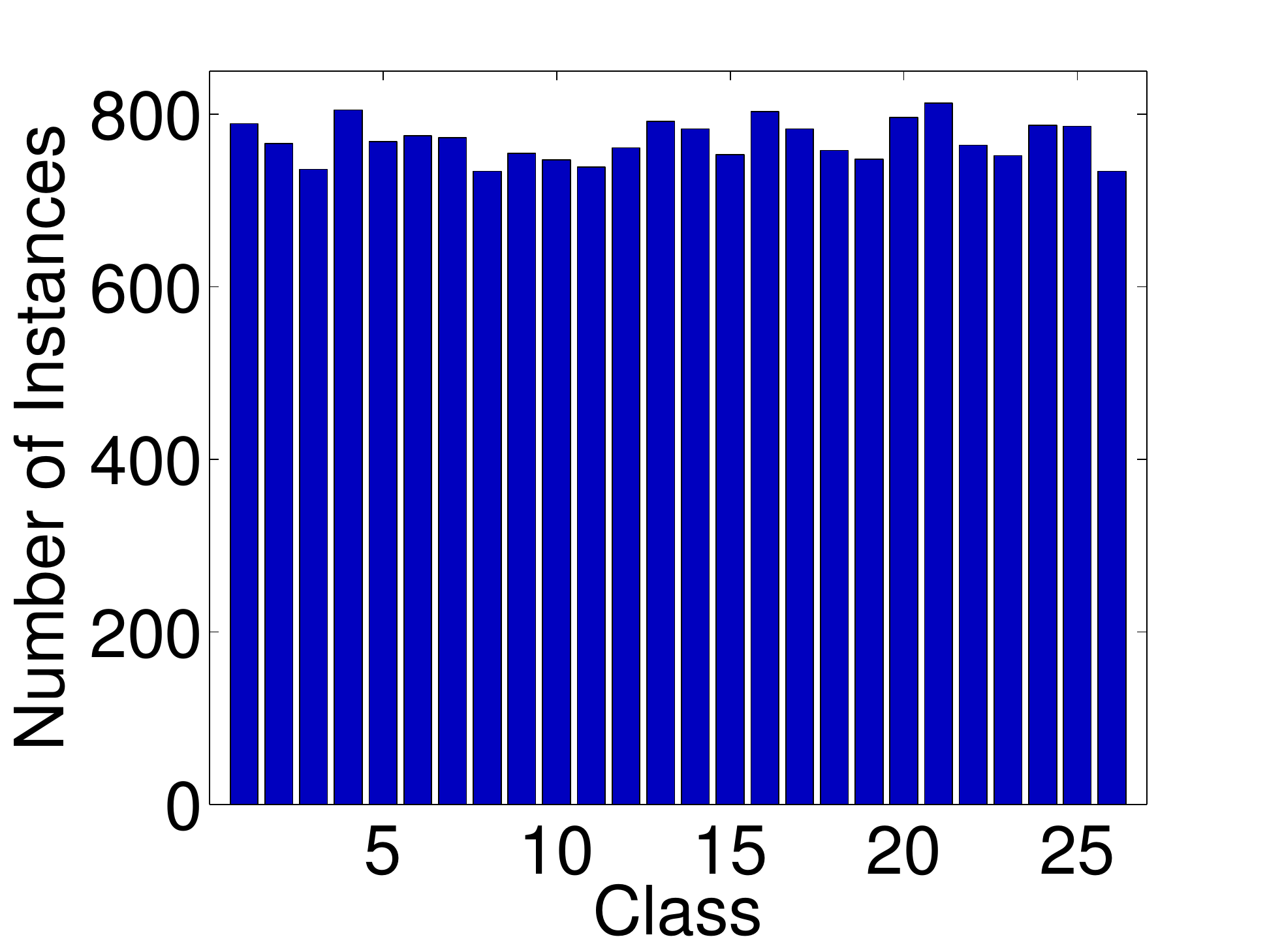}
    \label{fig:letter}
}
\subfigure[Reuters]
{
    \includegraphics[width=0.30\textwidth]{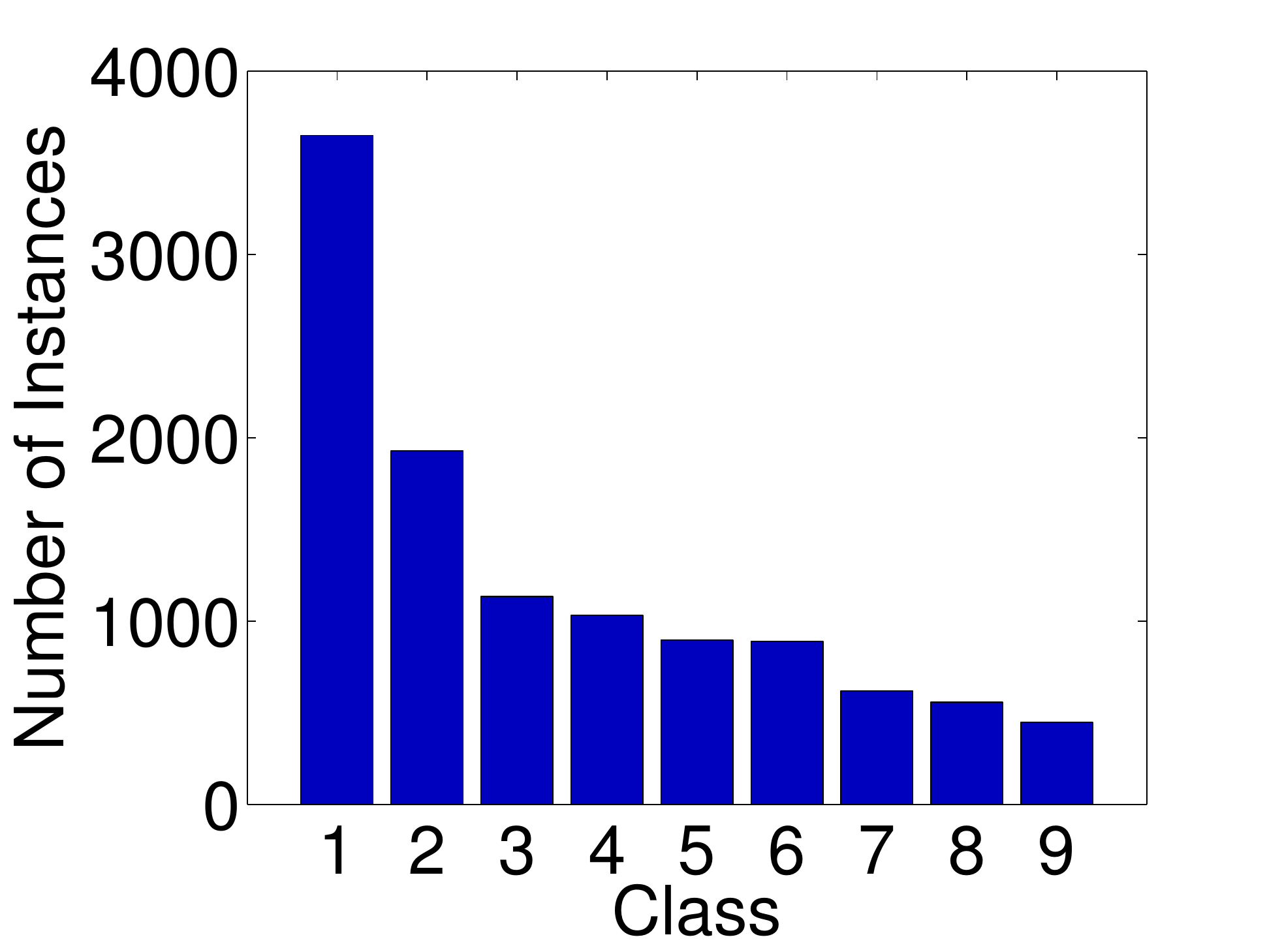}
    \label{fig:reuters}
}
  \caption{Class distribution for the three datasets.}
\end{figure}

\subsubsection{Results}

Table~\ref{fig:tab} presents the misclassification error rates
averaged on $10$ runs. Keep in mind that we have not conducted a very thorough optimization of the
hyper-parameters as the point here is essentially to compare \uma with the
other algorithms. 
\changeok{}{Additionally, we also report the error rates of $f_{\text{\SSVM}}$
when trained on the kernelized data with all dimensions, that is the kernelized
data before we project them onto their $D$ principal components. Because the projection
step is indeed unbecessary with \SSVM, this will give us insights on the error
due to the Kernel-\pca step.} 
Comparing the first and the last
columns of Table~\ref{fig:tab}, it appears that \uma always induces a slight
performance gain, {\em i.e.} a decrease of the misclassification rate, with respect to $f_{\bfy}$.

From the second and third columns of Table~\ref{fig:tab}, it is clear that
the reduced number of examples available to $f_{\text{conf}}$ induces a drastic
increase in the misclassification rate with respect to $f_{\text{full}}$ which
is allowed to use the totality of the dataset during the training phase.

Comparing \uma and $f_{\text{conf}}$ in Table~\ref{fig:tab} (fifth and second
columns), we observe that \uma
achieves lower misclassification rates on the Handwritten Digits and Letter
Recognition datasets but a higher misclassification rate on  Reuters. Although this
 is likely related to the strong class imbalance in the dataset. Indeed, some classes are overly represented, accounting for
the vast majority of the whole dataset (see Fig. \ref{fig:reuters}). Because
$\trainingset_{\text{conf}}$ is uniformly sampled from the main dataset,  
$f_{\text{conf}}$ is trained with a lot of examples from the overrepresented 
classes and therefore it is very effective, in the sense that it
achieves a low misclassification rate, for these
overrepresented classes; this, in turn, induces a (global) low misclassification rate, as possibly high misclassification rates on 
underrepresented classes are countervailed by theirs accounting for a small portion of the data. On the other
hand, because of this disparity in class representation, the slightest error in
the confusion matrix, granted it involves one of these overrepresented classes,
may lead to a significant increase of the
misclassification rate. In this regard, \uma is strongly disadvantaged with respect to $f_{\text{conf}}$ on the
Reuters dataset and it is the cause of the reported results.

\changeok{The
\SSVM classifier performs better than our algorithm when a kernel is involved, it achieves
very good results with the Handwritten Digits dataset and better results than
$f_{\text{full}}$ for the Letter Recognition problem. These results may be
rather puzzling. It seems unlikely that they are related to the (non-)usage of
the confusion matrix since in that case we would observe some correlation
between the misclassification rates of $f_{\text{conf}}$ and $f_{\text{\SSVM}}$
across datasets which is clearly not the case (see, for example, the previous discussion about
the Reuters dataset). A tentative explanation revolves on the structural
specifics of \SSVM and, essentially, the large margin paradigm, that give
 an edge to $f_{\text{\SSVM}}$ over the other classifiers based on
Perceptrons. While lacking theoretical evidence for this, empirical results in
the literature tends to confirm this~\cite{}; particularly when kernels are involved, 
which backs our observations. Moreover, this would explain that even on noisy
data, $f_{\text{\SSVM}}$ is able to achieve better classification performances
than $f_{\text{full}}$  whereas the latter has access to the complete and
taintless dataset.}
{The error rates for the \SSVM and \uma
classifiers are close for the Reuters and Handwritten Digits datasets whereas
\uma has a clear advantage on the Letter Recognition problem. On the other
hand, note that we used the \SSVM method in conjunction with a
Kernel-\pca for the sake of comparison with \uma in
 its kernelized form. The last column of Table \ref{fig:tab} tends to confirm
 that this projection strategy  increase the error rate of
 $f_{\text{\SSVM}}$. Also, reminds that the value of $m$ does not impact the
 performances of $f_{\text{\SSVM}}$ but has a significant effect on \uma, even
 though \uma never uses these labelled data.
For instance, on the Reuters datasets, increasing $m$ from $20$ to $70$ reduces
\uma's error rate by nearly $0.1$ (see the error rates of Fig. \ref{fig:f5}
($m=70$) when the size of labelled data is close to $550$, that is $5\%$ of the
whole dataset).
Despite our efforts to keep $m$ as small as possible, we could not go under $m =
50$ for the Letter Recognition dataset without compromising the
invertibility of the confusion matrix. The simple fact that an unusually high
number of examples are required to simply learn a rough classifier asserts the
complexity of this dataset. Moreover, the fact that
$f_{\bfy}$ also outperforms $f_{\text{\SSVM}}$ implies that the labels fed
to \uma are already mostly correct, and, according to our working assumptions,
 this is the most favorable setting for \uma.}

%Still SVMs are more refined than
%Perceptrons, therefore $f_{\text{\SSVM}}$ benefits from its structural advantage.

\begin{table}[tb]
  \centering
  \begin{tabular}{lcccccc}
  \toprule
  Dataset & {\tt $f_{\bfy}$} & {\tt $f_{\text{conf}}$} & {\tt
  $f_{\text{full}}$} & {\tt $f_{\text{\SSVM}}$} & \uma &
  {\tt $f_{\text{\SSVM}}$} (no K-\pca) \\ \midrule
  Handwritten Digits& $0.25$ & $0.21$ & $0.04$ & $0.15$ & $0.16$ & $0.07$ \\
  Letter Recognition & $0.35$ & $0.36$ & $0.23$ & $0.49$ & $0.33$& $0.18$ \\
  Reuters & $0.30$ & $0.17$ & $0.01$ & $0.22$ & $0.21$ & $0.22$  \\
  \bottomrule
  \end{tabular}
  \caption{Misclassification rates of different algorithms.}
  \label{fig:tab}
\end{table}

Nonetheless, the disparities between \uma and $f_{\text{conf}}$ deserve
more attention. Indeed, the same data are being used by both algorithms,
and one could expect more closeness in the results. To get a better
insight on what is occurring, we have reported
the evolution of the error rate of these two algorithms with respect to the
sampling size of $\trainingset_{\text{conf}}$ in Figure~\ref{fig:f5}. 
We can see that \uma is unaffected by the size of the sample, essentially
ignoring the possible errors in the confusion matrix on small samples. This
reinforces our previous results showing that \uma is robust to errors in the confusion matrix.
On the other hand, with the addition of more samples, the refinement of the
confusion matrix does not allow \uma to compete with the value of additional
(correctly) labelled data and eventually, when the size of
$\trainingset_{\text{conf}}$ grows, $f_{\text{conf}}$ performs better than \uma.
This points towards the idea that the aggregated nature of the confusion matrix
incurs some loss of relevant information for the classification
task at hand, and that a more accurate estimate of the confusion matrix, as
induced by, {\em e.g.}, the use of larger $\trainingset_{\text{conf}}$, may not
compensate for the information provided by additional raw data. 

\changeok{}{Building on this observation, we go a step further and replicate this
experiment for all of the three datasets; only this time we track the
performances of $f_{\text{\SSVM}}$ instead. The results are plotted on Figure~\ref{fig:ExpeRev2}.
 For the three datasets, we observe the same behavior as
before. Namely, \uma is able to maintain a low error rate even with a very small
size of $\trainingset_{\text{conf}}$. On the other hand, \uma does not benefit
as much as other methods from a large pool of labelled examples. In this case,
\uma quickly stabilizes while, to the contrary, the \SSVM method starts at a fairly high error rate
and keeps improving as more labelled examples are available.}

Beyond this,
it is important to recall that \uma never uses the labels of
$\trainingset_{\text{conf}}$ (those are only used to estimate the
confusion matrix, not the classifier---refer to Section \ref{sec:modus} for
the detailed learning protocol). While refining the estimation of $\confusion$
is undoubtedly useful, a direction toward substantial performance gains
should revolve around the combination of both this refined estimation of
$\confusion$ {\em and} the use of the correctly labelled training set
$\trainingset_{\text{conf}}$. This is a research subject on its own that we
leave for future work.

\begin{figure}[tb]
\centering
  \includegraphics[scale=0.5]{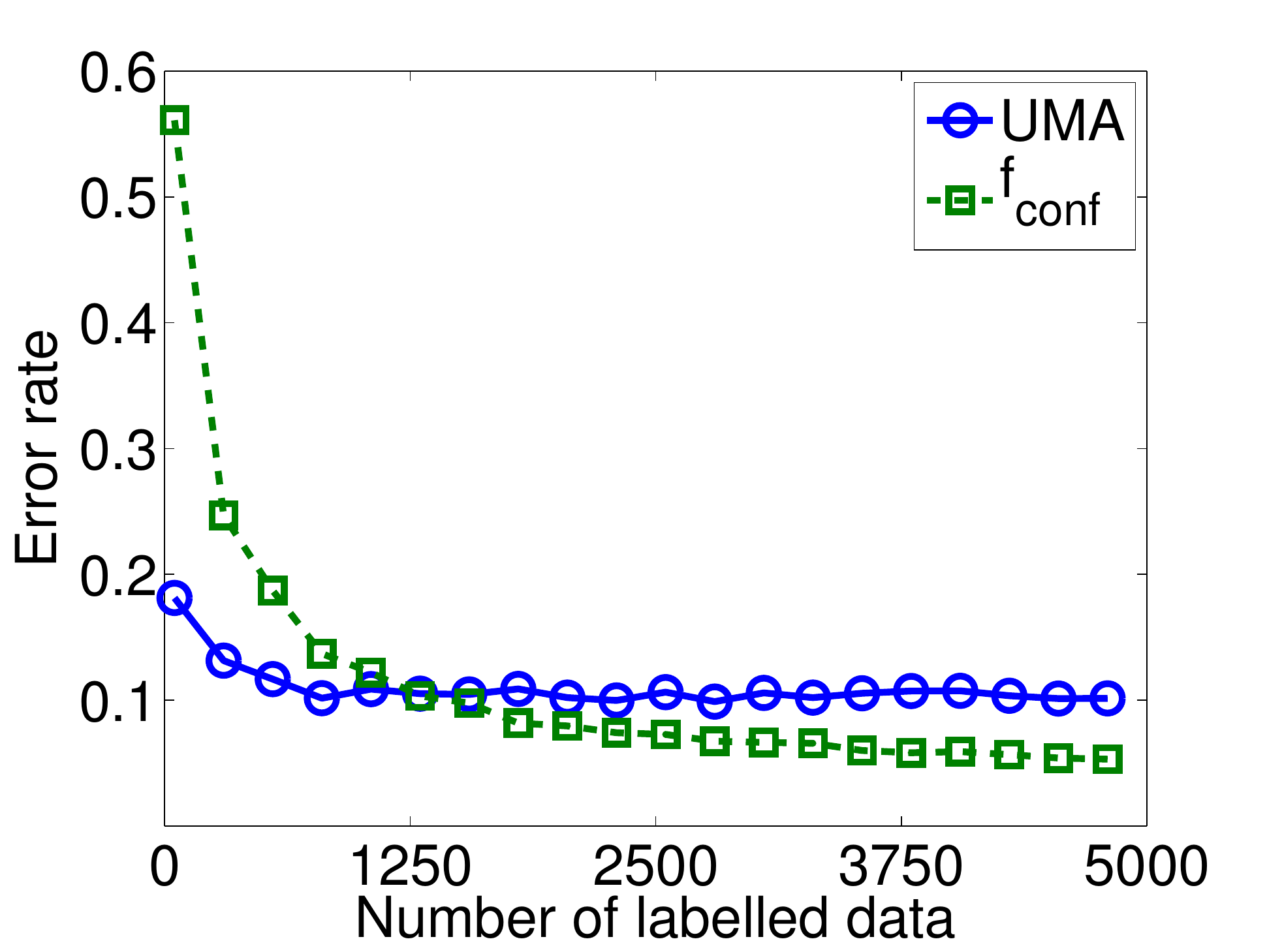}
  \caption{Error rate of \uma and $f_{\text{conf}}$ with respect to the
  sampling size. Reuters dataset with $m=70$ for the sake of figure's
  readability.}
  \label{fig:f5}
\end{figure}

\begin{figure}[tb]
  \centering
  \subfigure[Reuters dataset]
{
%    \centering
    \includegraphics[width=0.31\textwidth]{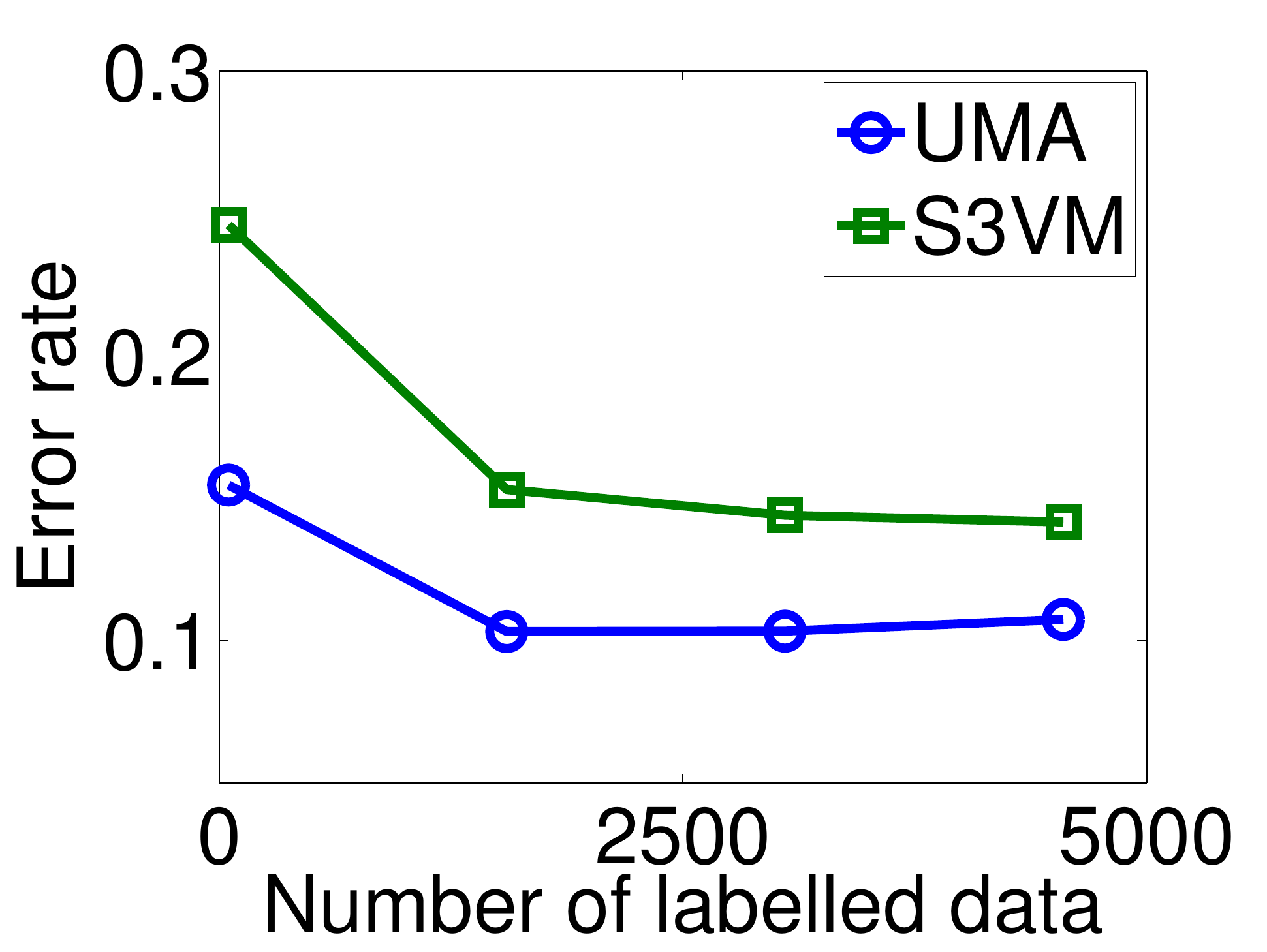}
} 
\subfigure[Digits dataset]
{
    \includegraphics[width=0.31\textwidth]{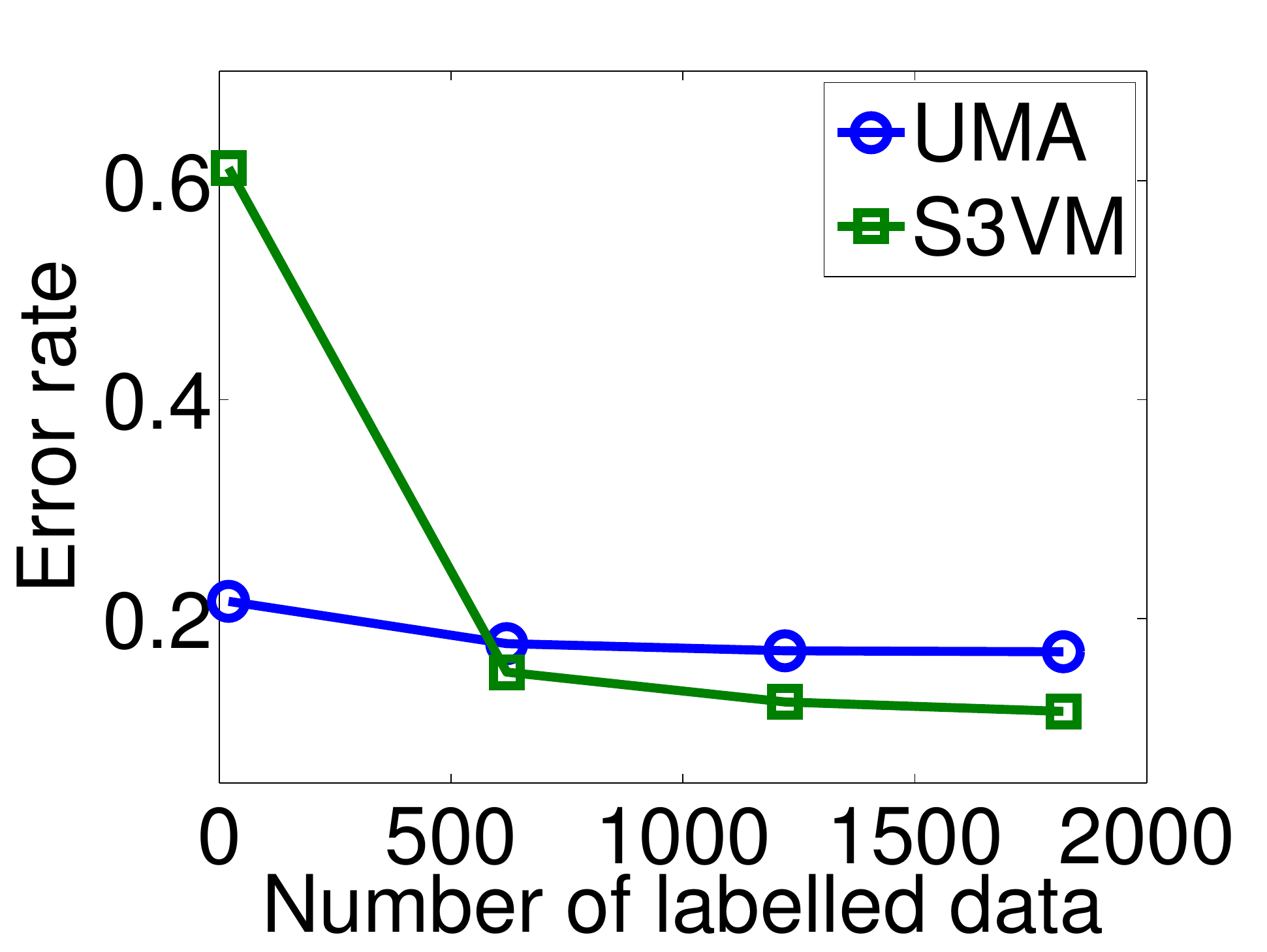}
}
\subfigure[Letter dataset]
{
    \includegraphics[width=0.31\textwidth]{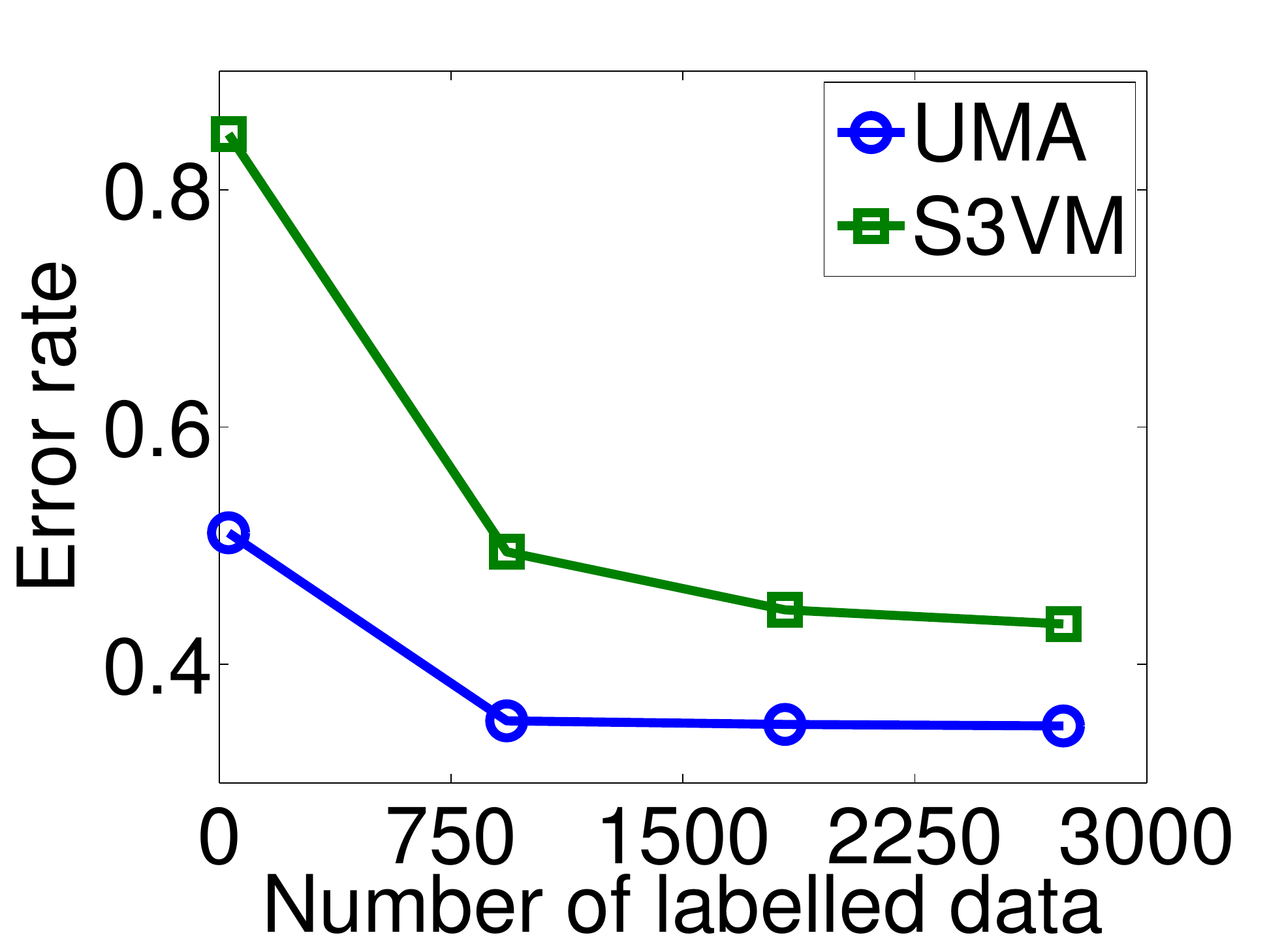}
}
  \caption{Error rates for the Reuter (left), optical digit recognition (center)
  and letter (right) datasets with respect to the size of
  $\trainingset_{\text{conf}}$. Average over $15$ runs. }
  \label{fig:ExpeRev2}
\end{figure}

All in all, the reported results advise us to prefer \uma over other available
methods when the amount of labelled data is particularly small, in addition, obviously, to the motivating case of
the present work where the training data are corrupted and the confusion matrix
is known. Also, another interesting finding we get is that even a rough
estimation of the confusion matrix is sufficient for \uma to behave well.

Finally, we investigate the impact of the selection strategy of $(p,q)$ on the
convergence speed of \uma (see Section \ref{sec:pqselect}).
We use three variations of \uma with different strategies
for selecting $(p,q)$ (error, confusion, and random) and monitor
each one along the learning process on the Reuters dataset. The error and
confusion strategies are described in Section \ref{sec:pqselect} and the random
strategy simply selects $p$ and $q$ at random.

\begin{figure}
  \centering
  \subfigure[Error rate]
{
%    \centering
    \includegraphics[width=0.48\textwidth]{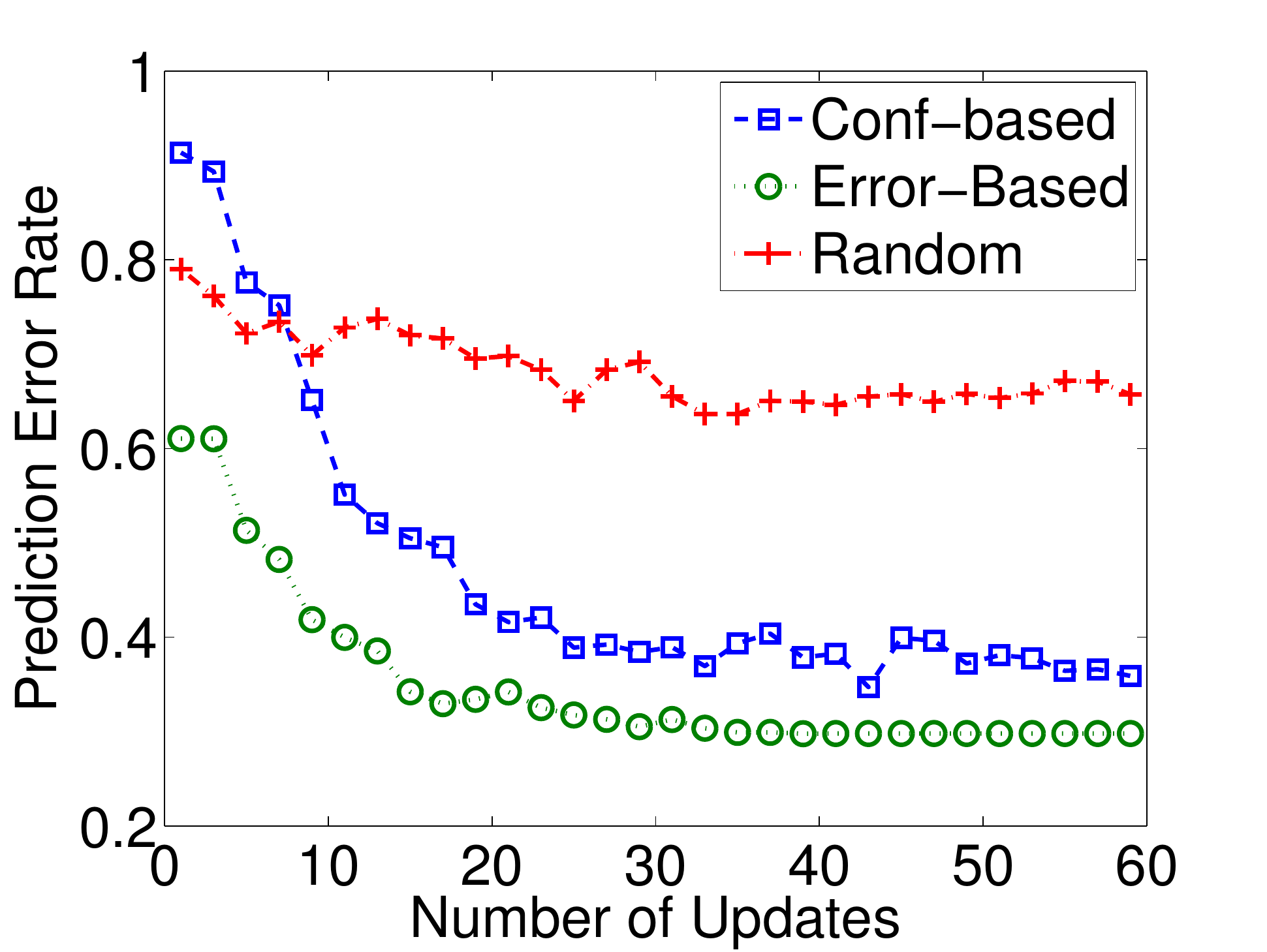}
} 
\subfigure[Confusion rate]
{
    \includegraphics[width=0.48\textwidth]{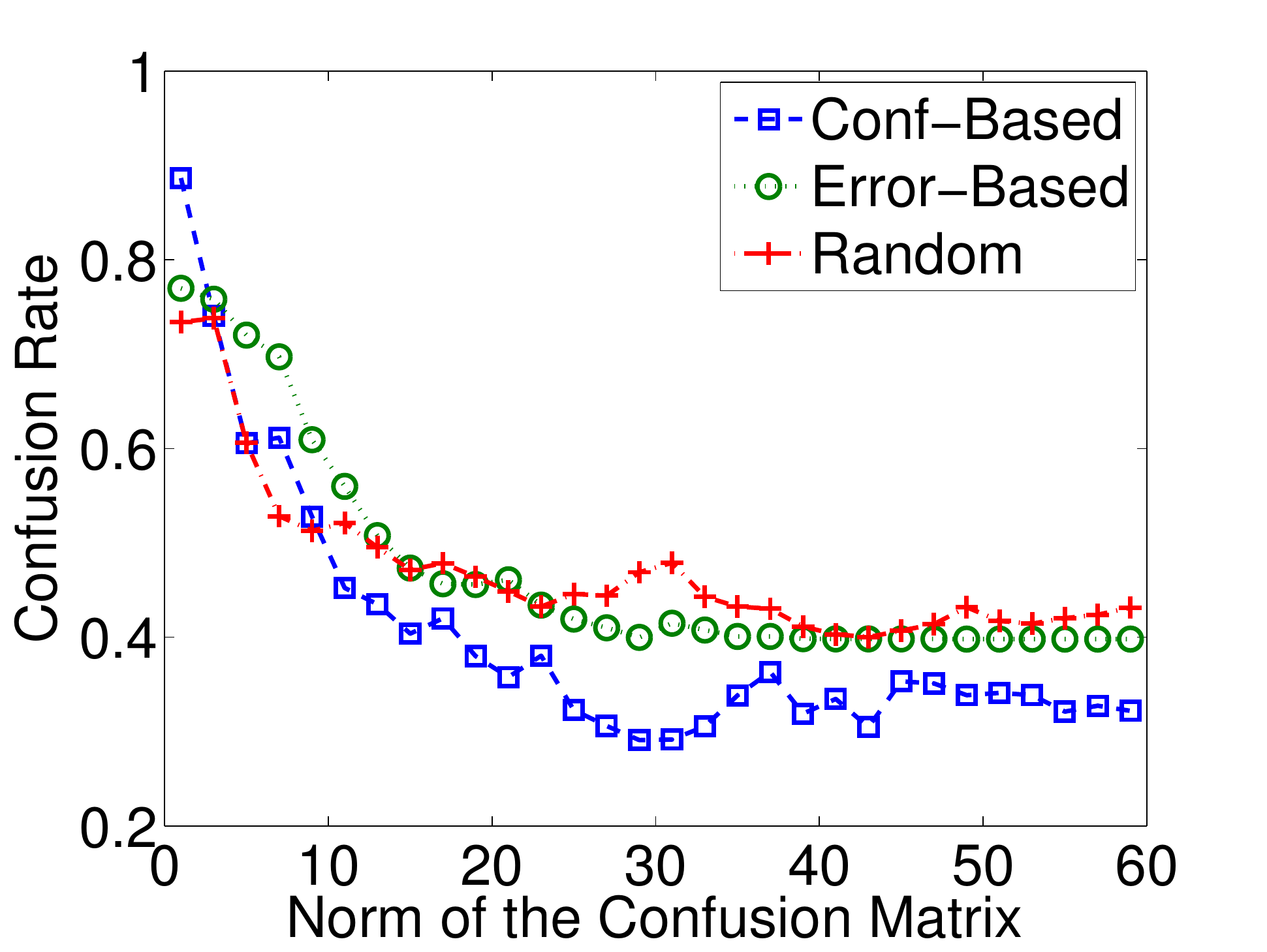}
}
  \caption{Error and confusion risk on Reuters dataset
  with various update strategies.}
  \label{fig:f3}
\end{figure}

From Figure \ref{fig:f3}, which reports the misclassification rate and the
confusion rate along the iterations, we observe that both performance
measures evolve similarly, attaining a stable state around the $30$th
iteration. The best strategy depends on the performance measure used, even
though regardless of the performance measure used, we observe that the
random selection strategy leads to a predictor that does not achieve the best
performance measure (there is always a curve beneath that of the random
selection procedure), which shows that it not an optimal selection strategy.

As one might expect, the confusion-based strategy
performs better than the error-based strategy
 when the confusion rate is retained as a performance measure,
while the converse holds when using the error rate. This
observation motivates us to thoroughly study the confusion-based
strategy in a near future as being able to propose methods robust to class
imbalance is a particularly interesting challenge of multiclass classification.

The plateau reached around the $30$th iteration may be puzzling,
since the studied dataset presents
no positive margin and convergence
is therefore not guaranteed.
One possible explanation for this is to see the Reuters dataset as linearly
separable problem corrupted by the effect of a noise process, which we call the
{\em intrinsic noise process} that has structural features `compatible' with
the classification noise. By this, we mean that there must be features of the
intrinsic noise such that, when additional classification noise is added, the
resulting noise that characterizes the data is similar to a classification
noise, or at least, to a noise that can be naturally handled by \uma.
Finding out the family of noise processes that can be combined with the
classification noise---or, more generally, the family of noise processes
themselves---without hindering the effectiveness of \uma is one research
direction that we aim to explore in a near future.

%!TEX root=unconfused.tex
\section{Conclusion} \label{sec:Concl}

In this paper, we have proposed a new algorithm, \uma---for Unconfused Multiclass Additive algorithm---to cope
with noisy training examples in multiclass linear problems. As 
its name indicates, it is a learning procedure that extends the (ultraconservative)
additive multiclass algorithms proposed by \cite{crammer03ultraconservative}; to
handle the noisy datasets, it only requires the information about the confusion
matrix that characterizes the mislabelling process.
This is, to the best of our knowledge, the first time
the confusion matrix is used as a way to handle noisy label
in multiclass problems.

One of the core ideas behind \uma, namely, the computation of the update vector $\xup{pq}$, is
not tied to the additive update scheme. Thus, as long as the assumption of
linear separability holds, the very same idea can be used to
render a wide variety of algorithms robust to noise by iteratively generating 
a noise-free training set with the consecutive values of $\xup{pq}$. 
Although, every computation of a new $\xup{pq}$ requires learning a new
classifier to start with. This may eventually incur prohibitive computational
costs when applied to batch methods (as opposed to online methods) which are
designed to process the entirety of the dataset at once.~\footnote{Nonetheless, from a purely theoretical point of view, \uma makes at
most $O(1/\margin^2)$ mistakes (see proposition \ref{prop:convergence}) and
computing $\xup{pq}$ can be done in $O(n)$ time. Therefore, polynomial batch
methods do not suffer much from this as their overall execution time is still
polynomial.}

\uma takes advantage of the online scheme of additive algorithms and avoids this
problem completely.
Moreover, additive algorithms are designed to directly handle
multiclass problem rather than having recourse to a bi-class mapping. The end-results of this are
tightened theoretical guarantees and a convergence rate that does not depend of
$Q$, the number of classes.
Besides, \uma can be directly used with any 
additive algorithms, allowing to handle noise with
multiple methods without further computational burden.

While we provide sample complexity analysis, it should be noted that
 a tighter bound can
be derived with specific multiclass tools, such as
the Natarajan's dimension (see \cite{DanielySBS11} 
for example), which allow to better specify the expressiveness of a multiclass classifier.
However, this is not the main focus of this paper and our results are based on
simpler tools.

To complement this work, we want to investigate
a way to properly tackle near-linear problems (such as Reuters). 
As for now the algorithm already does a very good jobs due to its
noise robustness. However more work has to be done to derive a
proper way to handle cases where a perfect classifier does not exist.
We think there
are great avenues for interesting research in this domain with an algorithm
like \uma and we are curious to see how this present work may carry
over to more general problems.

\paragraph{Acknowledgments.} The authors would like to thank the reviewers for their feedback and invaluable comments. This work is partially supported by the Agence Nationale de la Recherche (ANR), project GRETA 12-BS02-004-01. We would like to thank
the anonymous reviewers for their insightful and extremely valuable feedback on earlier versions of this paper.

\bibliographystyle{spmpsci}
\bibliography{unconfused}

\appendix
\normalsize
%!TEX root=unconfused.tex
\section{Double sample theorem}\label{apd:first}

\begin{proof}[Proposition \ref{prop:zpqerror}]

For a fixed pair $(p,q) \in \outputspace^2$,  we consider
the family of functions
$$ \bigF_{pq} \doteq \lbrace f: f(\V{x}) \doteq 
\dotProd{\V{w}_q - \V{w}_p}{x}: \V{w}_p, \V{w}_q \in \mathcal{B}^d
\rbrace $$ 
where $\mathcal{B}^d$ is a $d$-dimensional unit ball.
For each $f \in \bigF_{pq}$ define the corresponding ``loss'' function
$$l^{f}(\V{x}) \doteq l(f(\V{x})) \doteq 2 - f(\V{x}).$$

Strictly speaking, $l^{f}(\V{x})$ is not a loss as it does not take $y$ into
account, nonetheless it does play the same role in the following proof than a
regular loss in the regular double-sampling proof. One way to think of it is as
the loss of a problem for which we do not care about the observed labels but
instead we want to classify points into a predetermined class---in this case
$q$.

Clearly, $\bigF_{pq}$ is a subspace of affine functions, thus 
$\Pdim(\bigF_{pq}) \leq (d + 1)$, where $\Pdim(\bigF_{pq})$ is the
pseudo-dimension of $\bigF_{pq}$. Additionally, $l$ is Lipschitz
in its first argument with a Lipschitz factor of $L \doteq 1$. Indeed
$\forall y, y_1, y_2, \in \outputspace:
 \vert l(y_1, y) - l(y_2, y) \vert = \vert y_1 - y_2 \vert$.

Let $\UKDistri_{pq}$ be any distribution over $\inputspace \times
\outputspace$ and $T \in (\inputspace \times \outputspace)^m$ 
such that $T \sim \UKDistri_{pq}^m$, then define the 
\emph{empirical loss} $\err_T^l[f] \doteq \frac{1}{m}
\sum_{\V{x}_i \in T} l(\V{x}_i, y_i)$ and the
\emph{expected loss} $\err_{\UKDistri}^l [f] \doteq
 \Ex{\UKDistri}{l(\V{x},y)}$

The goal here is to prove that

\begin{align} \label{eq:result}
  \proba_{T \sim \UKDistri_{pq}^m} \left(
\sup_{f \in \bigF_{pq}} \vert \err_{\UKDistri}^l [f]
  - \err_{T}^l [f] \vert \geq \epsilon
 \right) \in \bigO \left(
 \left( \frac{8}{\epsilon} \right)^{(d+1)}
e^{ {m\epsilon^2}/_{128} } \right)
\end{align}

\begin{proof}[Proof of \eqref{eq:result}]
We start by noting that $l(y_1, y_2) \in [0, 2]$ and then proceed
with a classic $4$-step double sampling proof. Namely:
\paragraph{\bf Symmetrization.}
We introduce a \emph{ghost} sample $T' \in (\inputspace \times
\outputspace)^m$, $T' \sim \UKDistri_{pq}^m$ and show that for
$f^{\text{bad}}_T$ such that $ \vert \err_{\UKDistri_{pq}}^l
[f^{\text{bad}}_T] - \err_T^l [f^{\text{bad}}_T] \vert \geq \epsilon
$ then 

$$ \proba_{T' \vert T} \left(
\left| \err_{T'}^l [f^{\text{bad}}_T] - \err_{\UKDistri_{pq}}^l 
[f^{\text{bad}}_T] \right| \leq \frac{\epsilon}{2}
\right) \geq \frac{1}{2},$$
as long as $m\epsilon^2 \geq 32.$

It follows that 
\begin{align*}
& \proba_{(T,T') \sim \UKDistri_{pq}^m \times \UKDistri_{pq}^m}
\left( \sup_{f \in \bigF_{pq}} \vert \err_T^l [f] - \err_{T'}^l [f]
  \vert \geq \frac{\epsilon}{2}
\right) \\
&~ \geq \proba_{T \sim \UKDistri_{pq}^m} 
\left( \vert \err_{T}^l [f^{\text{bad}}_T] - \err_{\UKDistri_{pq}}^l
  [f^{\text{bad}}_T] \vert \geq \epsilon
\right) \times
\proba_{T' \vert T} \left(
\left| \err_{T'}^l [f^{\text{bad}}_T] - \err_{\UKDistri_{pq}}^l 
[f^{\text{bad}}_T] \right| \leq \frac{\epsilon}{2}
\right) \\
&~  \geq \frac{1}{2} \proba_{T \sim \UKDistri_{pq}^m} 
\left( \vert \err_{T}^l [f^{\text{bad}}_T] - \err_{\UKDistri_{pq}}^l
  [f^{\text{bad}}_T] \vert \geq \epsilon
\right) \\
&~  = \frac{1}{2} \proba_{T \sim \UKDistri_{pq}^m} 
\left( \sup_{f \in \bigF_{pq}} \vert \err_{T}^l [f] - \err_{\UKDistri_{pq}}^l
  [f] \vert \geq \epsilon
\right) \tag{By definition of $f^{\text{bad}}_T$}
\end{align*}

Thus upper bounding the desired probability by
\begin{align}
2 \times \proba_{(T,T') \sim \UKDistri_{pq}^m \times \UKDistri_{pq}^m}
\left( \sup_{f \in \bigF_{pq}} \vert \err_T^l [f] - \err_{T'}^l [f]
  \vert \geq \frac{\epsilon}{2}
\right) \label{eq:sndProba}
\end{align}

\paragraph{\bf Swapping Permutations.}
Let define $\Gamma_{m}$ the set of all
permutations that swap one or more elements of
$T$ with the corresponding element of $T'$ (i.e.
the $i$th element of $T$ is swapped with 
the $i$th element of $T'$). It is quite 
immediate that $\vert \Gamma_{m} \vert = 2^m$.
For each permutation $\sigma \in \Gamma_m$ we
note $\sigma(T)$ (resp. $\sigma(T')$) the set
originating from $T$ (resp. $T'$) from which
the elements have been swapped with $T'$
(resp. $T$) according to $\sigma$.

Thanks to $\Gamma_m$ we will be able to provide an
upper bound on \eqref{eq:sndProba}. Our starting
point is that $(T,T') \sim \UKDistri_{pq}^m \times
\UKDistri_{pq}^m$ then for any $\sigma \in \Gamma_m$, 
the random variable
$\sup_{f \in \bigF_{pq}} \vert \err_T^l [f] - \err_{T'}^l [f] \vert$
follows the same distribution as
$\sup_{f \in \bigF_{pq}} \vert \err_{\sigma(T)}^l [f] -
\err_{\sigma(T')}^l [f] \vert$.

Therefore:
\begin{align}
& \proba_{(T,T') \sim \UKDistri_{pq}^m \times \UKDistri_{pq}^m}
\left( \sup_{f \in \bigF_{pq}} \vert \err_T^l [f] - \err_{T'}^l [f]
  \vert \geq \frac{\epsilon}{2}
\right) \notag \\
& ~ = \frac{1}{2^m} \sum_{\sigma \in \Gamma_m}
\proba_{T,T' \sim \UKDistri_{pq}^m \times \UKDistri_{pq}^m} \left(
\sup_{f \in \bigF_{pq}} \vert \err_{\sigma(T)}^l [f] - 
\err_{\sigma(T')}^l [f] \vert \geq \frac{\epsilon}{2}
\right) \notag \\
&~ = \Ex{(T,T') \sim \UKDistri_{pq}^m \times \UKDistri_{pq}^m}
{\frac{1}{2m} \sum_{\sigma \in \Gamma_m} \indicator{
\sup_{f \in \bigF_{pq}} \vert \err_{\sigma(T)}^l [f] -
\err_{\sigma(T')}^l [f] \vert \geq \frac{\epsilon}{2}
}} \notag \\
& ~ \leq \sup_{(T,T') \in (\inputspace \times \outputspace)^{2m}}
\left[
\proba_{\sigma \in \Gamma_m} \left(
\sup_{f \in \bigF_{pq}} \vert \err_{\sigma(T)}^l [f] -
\err_{\sigma(T')}^l [f] \vert \geq \frac{\epsilon}{2}
\right)
\right], \label{eq:thdProba}
\end{align}
which concludes the second step.

\paragraph{\bf Reduction to a finite class.}
The idea is to reduce $\bigF_{pq}$ in
\eqref{eq:thdProba} to a finite class of functions.
For the sake of conciseness, we will not enter
into the details of the theory of \emph{covering numbers}.
Please refer to the corresponding literature for 
further details ({\em e.g.} \cite{DevroyeGL}).

In the following, $\bigN({\epsilon}/_8, \bigF_{pq}, 2m)$ will denote the
\emph{uniform ${\epsilon}/_8$convering number} of $\bigF_{pq}$
over a sample of size $2m$.

Let define $\bigG_{pq} \subset \bigF_{pq}$ such that $(l^{\bigG_{pq}})_{\vert
(T,T')}$ is an ${\epsilon/_8}$-cover of $(l^{\bigF_{pq}})_{\vert
(T,T')}$.
Thus, $\vert \bigG_{pq} \vert \leq \bigN( {\epsilon}/_8,
l^{\bigF_{pq}}, 2m ) < \infty$ Therefore, if $\exists f \in \bigF_{pq}$
such that $\vert \err_{\sigma(T)}^l [f] - \err_{\sigma(T')}^l [f]
\vert \geq \frac{\epsilon}{2}$ then, $\exists g \in \bigG_{pq}$
such that $\vert \err_{\sigma(T)}^l [g] - \err_{\sigma(T')}^l [g]
\vert \geq \frac{\epsilon}{4}$ and the following comes
naturally

\begin{align}
& \proba_{\sigma \in \Gamma_m} \left(
\sup_{f \in \bigF_{pq}} \vert \err_{\sigma(T)}^l [f] -
\err_{\sigma(T')}^l [f] \vert \geq \frac{\epsilon}{2}
\right) \notag \\
& ~ \leq \proba_{\sigma \in \Gamma_m} \left(
\max_{g \in \bigG_{pq}} \vert \err_{\sigma(T)}^l [g] -
\err_{\sigma(T')}^l [g] \vert \geq \frac{\epsilon}{4}
\right) \notag \\
& ~ \leq \bigN( {\epsilon}/_8, l^{\bigF_{pq}}, 2m ) \max_{g \in \bigG_{pq}}
\proba_{\sigma \in \Gamma_m} \left(
\vert \err_{\sigma(T)}^l [g] - \err_{\sigma(T')}^l [g] \vert 
\geq \frac{\epsilon}{8}
\right) \tag{union bound}
\end{align}

\paragraph{\bf Hoeffding's inequality.}
Finally, consider $\vert \err_{\sigma(T)}^l [g] - 
\err_{\sigma(T')}^l [g] \vert$ as the average of $m$
realizations of the same random variable, with expectation
equal to $0$. Then by Hoeffding's inequality we have
that\footnote{Note that in some references the right-hand side of
\eqref{eq:final} might viewed as a probability measure over $m$
independent Rademacher variables.}

\begin{align}
 & \proba_{\sigma \in \Gamma_m} \left(
\vert \err_{\sigma(T)}^l [g] - 
\err_{\sigma(T')}^l [g] \vert \geq \frac{\epsilon}{4}
\right) \leq 2 e^{{-m\epsilon^2}/_{128}} \label{eq:final}
\end{align}

Putting everything together yields the result w.r.t.
$\bigN ({\epsilon}/_8, l^{\bigF_{pq}}, 2m)$ for
$m\epsilon^2 \geq 32$. For $m\epsilon^2 < 32$
it holds trivially.

Recall	 that $l^{\bigF_{pq}}$ is Lipschitz in its first
argument with a Lipschitz constant $L = 1$ thus
$\bigN ({\epsilon}/_8, l^{\bigF_{pq}}, 2m) \leq 
\bigN ({\epsilon}/_8, \bigF_{pq}, 2m) = \bigO \left(
 \left( \frac{8}{\epsilon} \right)^{\Pdim(\bigF_{pq})} \right)$
\end{proof}

The last part of the proof comes from the observation that, for any fixed
$(p,q)$, we had never used any other specific information about $\bigF_{pq}$
other than the upper bound of $d+1$ over its pseudo dimension. In other words,
equation $\eqref{eq:result}$ holds for slightly modified definition of
$\bigF_{pq}$ as long as the pseudo dimension does not
exceed $d+1$.

Let us now consider :
$$ \H{\bigF_{pq}} \doteq \lbrace f: f(\V{x}) \doteq 
\indicator{t(\V{x}) = q} \indicator{\V{x} \in \setA{p}^{\alpha}}
\dotProd{\V{w}_p - \V{w}_q}{x}: \V{w}_p, \V{w}_q \in \mathcal{B}^d
\rbrace $$ 

Clearly for each function in $\H{\bigF_{pq}}$ there is
at most one corresponding affine function, thus $\H{\bigF_{pq}}$ and
$\bigF_{pq}$ share the same upper bound of $d+1$ on their pseudo-dimension.

Consequently, any covering number of $\bigF_{pq}$ is also a covering number of
$\H{\bigF_{pq}}$. More precisely, this proof holds true for any $\V{w}_p$ and
$\V{w}_q$, independently of $\setA{p}^{\alpha}$ which may itself be defined with
respect to $\V{w}_p$ and $\V{w}_q$.

%%% Keep this for now %%%
\iffalse
Clearly, for any fixed $(p,q)$ the same result holds as we
had never used any specific information about $\bigF_{pq}$ other than its pseudo
dimension. More precisely, this proof holds true for any $\V{w}_p$ and
$\V{w}_q$ independently of $\setA{p}^{\alpha}$ which may itself be defined with
respect to $\V{w}_p$ and $\V{w}_q$.

Indeed, for each function in $\H{\bigF_{pq}}$ there is
at most one corresponding affine function, therefore, $\H{\bigF_{pq}}$ and
$\bigF_{pq}$ share the same upper bound of $d+1$ on their pseudo-dimension.
\fi

It comes naturally that, fixing $S$ as the training set,
the following holds true:
$$\frac{1}{m} \sum_m  
\indicator{t(\V{x}) = q} \indicator{\V{x} \in \setA{p}^{\alpha}}
\V{x} =
\V{z}_{pq}.$$ Thus $$\left| \err_T^l [f] - \err_D^l [f] 
\right| = \left| \dotProd{\frac{\V{w}_p - \V{w}_q}
{\norm{\V{w}_p - \V{w}_q}}}{\V{z}_{pq}} - 
\dotProd{\frac{\V{w}_p - \V{w}_q}
{\norm{\V{w}_p - \V{w}_q}}}{\mupt{p}{q}} \right|.$$

We can generalize this result for any couple $(p,q)$
by a simple union bound, giving the desired 
inequality:
\begin{align*}\proba_{(\inputspace \times \outputspace) \sim \UKDistri} &\left(
\sup_{W \in \realset^{d \times Q}} \left| \dotProd{\frac{\V{w}_p - \V{w}_q}
{\norm{\V{w}_p - \V{w}_q}}}{\V{z}_{pq}} - 
\dotProd{\frac{\V{w}_p - \V{w}_q}
{\norm{\V{w}_p - \V{w}_q}}}{\mupt{p}{q}} \right|\geq \epsilon
\right) \\
 &\qquad\leq  \bigO \left(Q^2
 \left( \frac{8}{\epsilon} \right)^{(n+1)}
e^{ {m\epsilon^2}/_{128} } \right)
\end{align*}

Equivalently, we have that
$$ \left| \dotProd{\frac{\V{w}_p - \V{w}_q}
{\norm{\V{w}_p - \V{w}_q}}}{\V{z}_{pq}} - 
\dotProd{\frac{\V{w}_p - \V{w}_q}
{\norm{\V{w}_p - \V{w}_q}}}{\mupt{p}{q}} \right| \geq \epsilon $$
with probability $1 - \delta$ for 
$$ m \in \bigO\left( \frac{1}{\epsilon^2} \left[
\ln\left( \frac{1}{\delta} \right) + \ln(Q) + d\ln\left( \frac{1}{\epsilon} \right)
\right] \right).$$
\end{proof}

\end{document}